\documentclass[letterpaper,11pt,twocolumn]{article}

\newcommand{\reportnumber}{TR-RoboPI-2026-01}
\newcommand{\reportdate}{October 23, 2026} % or \today
\newcommand{\shorttitle}{Adversarial Narrowband Acoustic Source Localization}
\newcommand{\fulltitle}{On Distributed Sensing and Adversarial Localization of Narrowband\\ Acoustic Sources for Offshore Data Center Monitoring}

\usepackage[letterpaper, top=1in, bottom=1in, left=0.75in, right=0.75in,
            headheight=14pt, headsep=14pt]{geometry}
\usepackage[T1]{fontenc}
\usepackage{lmodern}
\usepackage{microtype}

\usepackage{xspace,xcolor}
\definecolor{UFblue}{HTML}{0021A5}
\definecolor{UForange}{HTML}{FA4616}
\definecolor{lightgray}{gray}{0.95}

\usepackage{cite}
\usepackage{amsmath,amssymb,amsfonts}
\usepackage{amsthm}
\theoremstyle{plain}
\newtheorem{proposition}{Proposition}
\theoremstyle{remark}
\newtheorem{remark}{Remark}
\usepackage{graphicx}
\usepackage[size=small, labelfont={bf,sf}, labelsep=period]{caption}
\usepackage{subcaption}
\usepackage{booktabs}
\usepackage{tabularx}
\usepackage{multicol}
\usepackage{multirow}
\usepackage{soul}
\usepackage{siunitx}
\usepackage{physunits}
\usepackage{algorithm}
\usepackage{algorithmic}
\usepackage{array}
\usepackage[normalem]{ulem}
\usepackage{makecell}
\usepackage{wrapfig}
\usepackage{lipsum}
\usepackage{ragged2e}

\usepackage[shortlabels]{enumitem}

\usepackage{titlesec}
\titleformat{\section}
  {\normalfont\large\sffamily\bfseries\color{UFblue}}{\thesection}{0.6em}{}
\titleformat{\subsection}
  {\normalfont\normalsize\sffamily\bfseries\color{UFblue}}{\thesubsection}{0.5em}{}
\titleformat{\subsubsection}
  {\normalfont\normalsize\sffamily\itshape}{\thesubsubsection}{0.5em}{}
\titlespacing*{\section}{0pt}{2.0ex plus 0.5ex minus 0.2ex}{1.0ex plus 0.2ex}
\titlespacing*{\subsection}{0pt}{1.5ex plus 0.5ex minus 0.2ex}{0.6ex plus 0.2ex}
\titlespacing*{\subsubsection}{0pt}{1.2ex plus 0.3ex minus 0.2ex}{0.4ex}

\usepackage{fancyhdr}
\makeatletter
\let\NAT@parse\undefined
\makeatother
\usepackage[colorlinks=true, linkcolor=UFblue, citecolor=UFblue,
            urlcolor=UForange]{hyperref}
\DeclareMathOperator*{\argmin}{arg\,min}
\makeatletter
\DeclareRobustCommand\onedot{\futurelet\@let@token\@onedot}
\def\@onedot{\ifx\@let@token.\else.\null\fi\xspace}
\def\eg{\emph{e.g}\onedot} 
\def\ie{\emph{i.e}\onedot}

\makeatother
\long\def\invis#1{}

\newcommand{\codecomment}[1]{\textcolor{gray}{\textit{//#1}}}

\hypersetup{
  pdftitle={\fulltitle},
  pdfauthor={Adnan Abdullah, David Blow, Harsha Gundala Sureshbabu, Dongyoon Joung, Sara Rampazzi, Md Jahidul Islam},
  pdfsubject={\reportnumber}
}

\begin{document}

% TITLE PAGE (single column, no page number)

\begin{titlepage}
\centering
\sffamily

% --- Top banner ---
{\color{UFblue}\rule{\textwidth}{2pt}}\par\vspace{4pt}
\begin{minipage}{0.5\textwidth}
  \raggedright
  {\large\bfseries\color{UFblue} TECHNICAL REPORT}
\end{minipage}%
\begin{minipage}{0.5\textwidth}
  \raggedleft
  {\small\bfseries Report No.\ \reportnumber}\\[2pt]
  {\small \reportdate}
\end{minipage}\par
\vspace{4pt}{\color{UFblue}\rule{\textwidth}{0.6pt}}\par

\vspace{1.4in}

% --- Title ---
{\Large\bfseries\color{UFblue} \fulltitle \par}

\vspace{0.3in}
{\color{UForange}\rule{0.25\textwidth}{1.5pt}}\par
\vspace{0.3in}

% --- Authors ---
{\large
Adnan Abdullah$^{1}$, David Blow$^{1}$, Harsha Gundala Sureshbabu$^{1}$, Dongyoon Joung$^{1}$,\\ Sara Rampazzi$^{2}$, and Md Jahidul Islam$^{1}$\par}
\vspace{0.15in}
{\normalsize\color{gray}
$^{1}$RoboPI Laboratory, ECE Department, University of Florida\\
$^{2}$CPSec Laboratory, CISE Department, University of Florida\par}

\vspace{0.6in}

% --- Abstract box ---
\begin{minipage}{0.88\textwidth}
\colorbox{lightgray}{%
\begin{minipage}{\dimexpr\textwidth-2\fboxsep\relax}
\vspace{6pt}
\rmfamily\small
{\sffamily\bfseries\color{UFblue} Abstract}\par\vspace{4pt}
\justifying
This report investigates an acoustic surveillance framework for localizing and tracking narrowband adversarial acoustic sources that threaten offshore data centers. Unlike conventional sound source localization (SSL) methods that rely on broadband signals and compact receiver arrays, the considered scenario involves narrowband emissions and sparse sensors. In such scenarios, the observed time-difference-of-arrival (TDOA) measurements are aliased and produce an ambiguous solution space, rendering conventional localization methods ineffective. To address this challenge, we integrate a mobile receiver that temporarily forms anchor pairs with selected fixed receivers, constraining the feasible solution space. The proposed two-stage pipeline first estimates the initial source position using anchor constraints, then initializes a particle filter that jointly tracks source position and velocity using one live anchor measurement and quality-adaptive fixed-pair measurements. We evaluate the proposed method through Monte Carlo acoustic simulations, ROS-Gazebo robotic demonstrations, and open-water field experiments. Our framework achieves a localization success rate of $91\%$ in simulation, while $9$ of $10$ field trials show tracking error below $2$\,m under receiver-position uncertainty and noisy sensing conditions. These findings demonstrate the feasibility of passive acoustic surveillance for protecting large-scale infrastructure against acoustic threats.

\vspace{6pt}
{\sffamily\bfseries\color{UFblue} Keywords:}
Offshore surveillance; narrowband acoustic localization; data center monitoring.
\vspace{6pt}
\end{minipage}}
\end{minipage}

\vfill

% --- Footer of title page ---
{\color{UFblue}\rule{\textwidth}{0.6pt}}\par\vspace{4pt}
{\small\color{gray} University of Florida \,$\cdot$\, Gainesville, FL, USA}
\end{titlepage}

% MAIN BODY (two columns, numbering restarts at page 1)

\section{Introduction}

Modern distributed computing infrastructure, such as data centers, storage clusters, and edge computing facilities, relies on large numbers of spatially distributed hardware units~\cite{zhang2020distributed}. These systems are vulnerable to acoustic threats, where injected sounds can induce mechanical vibrations in sensitive components such as hard disk drives (HDDs)~\cite{bolton2018blue,sheldon2023deep}. Recent studies demonstrate that an adversarial agent can remotely inject malicious sound, leading to I/O disruptions and data unavailability in server systems~\cite{sheldon2024aquasonic}. Consequently, identifying the agent's location is a critical capability for monitoring and protecting distributed computing facilities~\cite{abdullah2025active}.

Acoustic source localization (ASL/SSL) has been extensively studied in both in-air and underwater environments using microphone/hydrophone arrays. These receiver arrays exploit spatially distributed measurements such as received signal strength (RSS)~\cite{chang2018target}, time difference of arrival (TDOA)~\cite{liu2023efficient,chen2017tdoa,mandic2019underwater}, frequency difference of arrival (FDOA)~\cite{jiang2020underwater}, and direction of arrival (DOA)~\cite{sun2023low} to estimate the acoustic source location. Most existing TDOA-based localization methods assume broadband signals, such as human speech, marine mammal vocalizations, pulsed signals, or environmental sounds, that provide unambiguous delay estimates between each receiver pair~\cite{hendricks2019automated,sun2017indoor}. In contrast, an adversarial source may intentionally emit a narrowband tonal signal near a resonance frequency to maximize damage to storage devices. Such periodicity induce \emph{phase aliasing} in TDOA measurements, where the observable delay is known only modulo the acoustic wavelength~\cite{reddy2014unambiguous,li2025enhanced}; see Fig.~\ref{fig:fig1}. Consequently, the receiver pairs yield finitely many feasible source locations, resulting in an ambiguous localization problem~\cite{ballal2010phase}.

\begin{figure}[t]
    \centering
    \includegraphics[width=\linewidth]{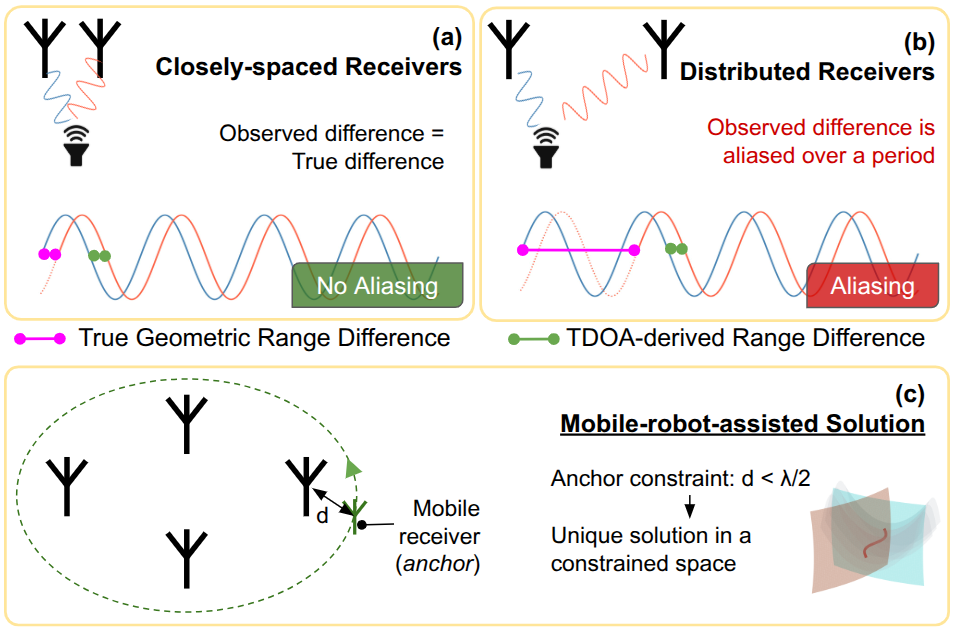}
    \vspace{-3mm}
    \caption{The narrowband phase aliasing problem and the proposed adversarial localization framework are illustrated. (a) Closely spaced receivers provide unaliased range differences that match the true geometric differences. (b) Distributed receivers observe phase-aliased differences, resulting in an ambiguous solution space. (c) A mobile robot forms short-baseline \textit{anchor pairs} with static receivers, providing geometric constraints for unique localization.}
    \label{fig:fig1}
    \vspace{-3mm}
\end{figure}

Practical surveillance requirements impose another challenge: acoustic sensing must be distributed across a large area. Deploying compact arrays at every infrastructure node is costly and difficult to scale. Conversely, a single centralized array is ineffective because signal attenuation and multipath propagation degrade measurements over long distances~\cite{ajdler2004acoustic}. Therefore, acoustic monitoring systems rely on sparsely distributed single receivers~\cite{azimi2019acoustic}. In such settings, the subset of receivers with reliable measurements varies with source location, necessitating quality-based adaptive channel weighting.

% The aforementioned challenges motivate a new formulation of adversarial acoustic source localization (\textbf{AASL}) that accounts for (i) sparse distributed sensing, (ii) narrowband phase aliasing, and (iii) receiver-dependent channel degradation. In this report, we address these challenges by introducing a mobile robot-assisted localization framework.
% Specifically, we augment the distributed receiver network with a mobile surveillance robot, \ie, an unmanned surface/underwater vehicle (USV/UUV) carrying an acoustic receiver. During an acoustic attack, the robot moves toward high-SNR receivers and forms temporary \emph{anchor pairs}. The TDOAs measured at anchor pairs provide necessary geometric constraints to collapse the ambiguity induced by phase aliasing. Using anchor constraints, we generate candidate source locations in the solution space, score them using alias-aware residuals from other receiver pairs, and refine the best candidates through local Gauss-Newton optimization to obtain the initial source position. In parallel, channel-quality scores are used to weight each measurement, reducing the influence of degraded channels. Finally, we initialize a particle filter around the refined estimate to jointly track the source position and velocity over time. 

These challenges motivate the investigation of adversarial acoustic source localization (\textbf{AASL}) under three coupled conditions: \textbf{(i)} sparse distributed sensing, \textbf{(ii)} narrowband phase aliasing, and \textbf{(iii)} receiver-dependent channel degradation. This report investigates the following research questions:

\begin{itemize}[leftmargin=*,nolistsep]
    \item \textbf{RQ1: Phase ambiguity.} Can unaliased geometric constraints obtained through mobile sensing resolve the phase ambiguity induced by narrowband signals and distributed receivers?

    \item \textbf{RQ2: Geometry and uncertainty.} How do anchor selection, receiver geometry, and measurement uncertainty affect initial localization accuracy and subsequent tracking performance?

    \item \textbf{RQ3: Channel degradation.} To what extent can quality-adaptive measurement weighting improve localization and tracking when individual receiver channels are degraded?

    \item \textbf{RQ4: Practical feasibility.} How effectively can mobile-assisted distributed acoustic surveillance be implemented across floating and submerged infrastructure and validated in open-water experiments?
\end{itemize}

To investigate these questions, we develop an adversarial localization framework that augments a distributed receiver network with a mobile acoustic sensor. The framework comprises two main technical components: \textbf{(i)} a mobile-assisted sensing architecture that forms temporary short-baseline \emph{anchor pairs} with selected static receivers to obtain unaliased geometric constraints; and \textbf{(ii)} a two-stage localization and tracking pipeline that combines anchor-constrained initialization, alias-aware candidate refinement, particle-filter tracking, and quality-adaptive measurement weighting. During initialization, the mobile receiver visits selected static receivers to acquire two independent anchor constraints while the source is assumed to remain approximately stationary. These constraints reduce the feasible solution space to a one-dimensional curve under suitable receiver geometry. Candidate locations along this curve are evaluated using phase-aliased measurements from other receiver pairs and refined through local Gauss-Newton optimization. The resulting estimate initializes a particle filter that jointly tracks source position and velocity, while channel-quality weighting reduces the influence of unreliable measurements.

Our investigation spans three evaluation settings. First, Monte Carlo acoustic simulations examine phase ambiguity, receiver geometry, measurement uncertainty, and channel degradation under controlled conditions. Second, ROS-Gazebo robotic simulations evaluate the integration of mobile sensing and source tracking in floating-barge and submerged-pod surveillance scenarios. Third, open-water experiments assess the framework using a scaled floating data center prototype equipped with six distributed receivers, a mobile receiver platform, and a USV-mounted adversarial source. Performance is evaluated using initial localization error, localization success rate, and temporal tracking error.

The results demonstrate the importance of mobile anchor constraints for resolving narrowband localization ambiguity. In Monte Carlo simulations, the two-anchor configuration achieves a $91\%$ initial localization success rate, outperforming under-constrained configurations. Sensitivity analyses further reveal that localization reliability depends strongly on anchor measurement accuracy, while quality-adaptive weighting improves robustness under receiver-channel degradation. Robotic simulations demonstrate tracking across different infrastructure configurations, and open-water experiments achieve successful localization with tracking error below $2$\,m in $9$ of $10$ trials. Together, our investigations characterize both the potential of distributed acoustic surveillance and the practical importance of receiver geometry and measurement reliability in offshore deployments.

\section{Background \& Preliminaries}
We consider a passive surveillance system consisting of $N$ spatially distributed acoustic receivers (microphones/hydrophones) at known positions $\{\mathbf{r}_i\}_{i=1}^{N}$ and an additional receiver mounted on a rail or a mobile robot. The mobile receiver can move within the environment and approach a static receiver to form a temporary \emph{anchor} pair. In contrast to conventional receiver arrays, this static-mobile configuration enables scalable coverage with reduced sensors. The objective is to estimate the state of an adversarial acoustic source $\mathbf{x} = [\mathbf{p}^\top, \mathbf{v}^\top]^\top \in \mathbb{R}^{2d}$, where $\mathbf{p}$ and $\mathbf{v}$ denote its position and velocity, respectively, and $d \in \{2,3\}$ denotes the spatial dimension of the environment. This section introduces the adversary model, derives the narrowband TDOA measurement model, and summarizes the limitations of existing localization techniques that motivate this work.

\subsection{Acoustic Injection Attacks in Data Centers}
The monitored data center facility contains sound-sensitive storage devices (\eg, HDDs), housed within one or more pressure-sealed enclosures referred to as pods. A remote adversary can emit a tonal signal near the structural resonant frequency, inducing mechanical vibrations that propagate through the enclosure and disturb the spinning platters and read/write actuator of an HDD~\cite{bolton2018blue,sheldon2023deep,sheldon2024aquasonic}. Such disturbances can cause I/O failures, data corruption, application crashes, and denial of service in computer systems. This acoustic vulnerability is particularly relevant to offshore and underwater data centers, where the surrounding water and the internal nitrogen-filled chamber provide stronger acoustic coupling~\cite{blow2025detection}. 
We model the adversarial acoustic source as a stationary or slowly moving, non-cooperative agent emitting a narrowband sound with dominant frequency $f_0$. This narrowband signaling condition leads to phase aliasing and affects TDOA estimation in distributed single-receiver systems.

\subsection{TDOA Measurement and Phase Aliasing}
\label{subsec:tdoa_aliasing}
TDOA-based localization relates the difference in signal arrival times across two receivers to their geometric range difference. For a receiver pair $(i,j)$ with known positions $\mathbf{r}_i$ and $\mathbf{r}_j$, the true source-to-receiver range difference is
\begin{equation}
\Delta \rho_{ij}(\mathbf{p}) = \|\mathbf{p} - \mathbf{r}_i\| - \|\mathbf{p} - \mathbf{r}_j\|.
\label{eq:range_difference}
\end{equation}
The locus of positions satisfying \eqref{eq:range_difference} forms a hyperbola in two dimensions and a hyperboloid manifold in three dimensions. Measurements from a sufficient number of independent pairs yield a unique intersection of these manifolds, corresponding to the estimated source location~\cite{diez20193d,park2018spectral}.

For broadband acoustic signals, TDOA is commonly estimated using generalized cross-correlation methods such as GCC-ML and GCC-PHAT~\cite{knapp2003generalized,chen2006time}. These methods exploit phase information from multiple frequency bins to identify a dominant correlation peak. For a single-frequency signal, however, the correlation is periodic and does not provide a unique TDOA~\cite{salvati2021time,yang2025neural}. We instead extract the complex phasor at the dominant frequency. For a signal window $x_i[n]$ measured at receiver $i$,
\begin{equation}
    A_i
    =
    \sum_{n}
    w[n]\,x_i[n]\,
    e^{-j2\pi f_0 n/f_s},
    \label{eq:phasor_extraction}
\end{equation}
where $w[n]$ is a Hann window and $f_s$ is the sampling frequency.
The relative phase between receivers $i$ and $j$ is $\phi_{ij} = \angle\left(A_iA_j^{*}\right)$,
which yields the TDOA in range-difference units (meters):
\begin{equation}
\hat{z}_{ij}
=
-\frac{\lambda}{2\pi}\phi_{ij},
\label{eq:phase_to_range}
\end{equation}
where $\lambda={c}/{f_0}$ is the signal wavelength.

\begin{figure*}[t]
    \centering
    \includegraphics[width=\linewidth]{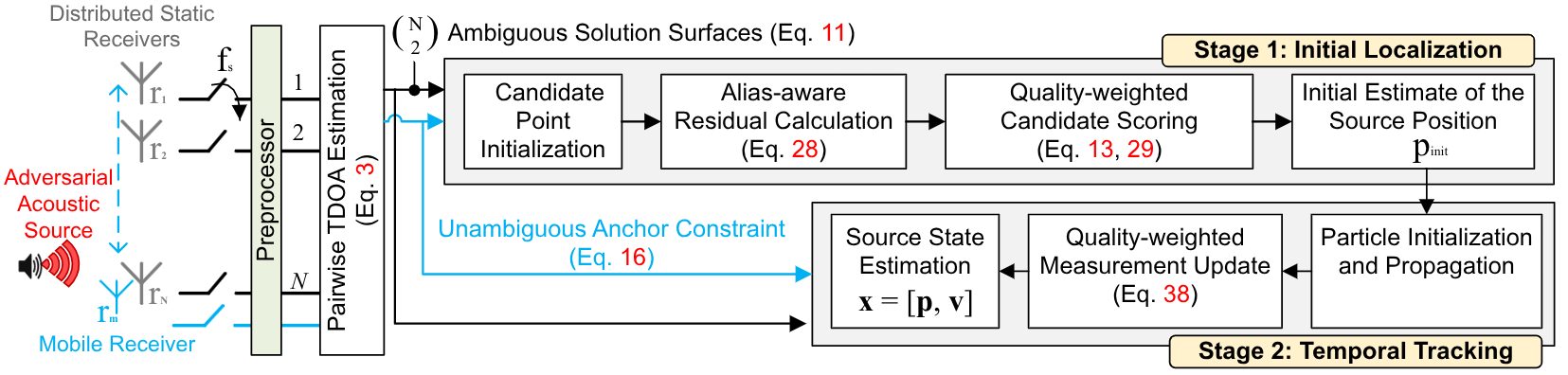}
    \caption{The proposed AASL framework utilizes $N$ spatially-distributed static receivers and a mobile receiver that forms temporary anchor pairs with static receivers. Each channel senses the narrowband signal emitted by the acoustic source, which is processed to estimate pairwise TDOAs. In Phase 1, anchor constraints from the mobile receiver are combined with ambiguous fixed-pair measurements to generate, score, and refine candidate source locations in the solution space, yielding the initial position estimate $\hat{\mathbf{p}}_{\mathrm{init}}$. In Phase 2, a particle filter jointly tracks the source position and velocity over time using one live anchor measurement and quality-weighted fixed-pair measurements.}
    \label{fig:pipeline}
\end{figure*}

\vspace{1mm}
\noindent
\textbf{Phase aliasing.} Let $b_{ij} = \|\mathbf{r}_i-\mathbf{r}_j\|$ denote the baseline, \ie, the receiver separation. By the reverse triangle inequality, every physically possible range difference is bounded by the baseline:
\begin{equation}
    |\Delta\rho_{ij}(\mathbf{p})|
    \leq
    b_{ij}.
    \label{eq:range_bound}
\end{equation}
When $b_{ij}\geq\lambda/2$, as is typical for sparsely distributed receivers, different range differences separated by integer multiples of $\lambda$ may produce the same observed phase~\cite{ballal2010phase}. Hence, the observed difference in~\eqref{eq:phase_to_range} is essentially the true range difference of~\eqref{eq:range_difference} modulo $\lambda$:
\begin{equation}
    \hat{z}_{ij}
    =
    \mathrm{modulo}_{\lambda}
    \left(
        \Delta\rho_{ij}(\mathbf{p})
    \right),
    \label{eq:wrapped_tdoa}
\end{equation}
where $\mathrm{modulo}_\lambda(\cdot)$ maps its argument to $[-\lambda/2,\, \lambda/2)$.
Equivalently, there exists an unknown integer alias index $k_{ij}\in\mathbb{Z}$ such that
\begin{equation}
\Delta\rho_{ij}(\mathbf{p})
=
\hat{z}_{ij}
+
k_{ij}\lambda.
\label{eq:alias_index}
\end{equation}
Although $k_{ij}$ is unknown, the physical bound in~\eqref{eq:range_bound} restricts it to a finite set, as formalized below.

\begin{proposition}[Finite feasible alias set]
\label{prop:finite_alias_set}
For a receiver pair with baseline $b_{ij}$ and wrapped measurement $\hat{z}_{ij}$, the physically feasible alias indices belong to the finite set
\begin{equation}
    \mathcal{K}_{ij}
    =
    \left\{
    k\in\mathbb{Z}:
    k_{ij}^{\min}
    \leq k
    \leq
    k_{ij}^{\max}
    \right\},
    \label{eq:feasible_alias_set}
\end{equation}
where
\begin{equation}
    k_{ij}^{\min}
    =
    \left\lceil
    \frac{-b_{ij}-\hat{z}_{ij}}{\lambda}
    \right\rceil,
    \qquad
    k_{ij}^{\max}
    =
    \left\lfloor
    \frac{b_{ij}-\hat{z}_{ij}}{\lambda}
    \right\rfloor.
    \label{eq:k_minmax}
\end{equation}
The number of physically feasible alias branches is therefore
\begin{equation}
    N_{ij}
    =
    \max\left(
    0,\,
    k_{ij}^{\max}-k_{ij}^{\min}+1
    \right).
    \label{eq:num_aliases}
\end{equation}
\end{proposition}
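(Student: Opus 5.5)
The argument combines the alias-index relation \eqref{eq:alias_index} with the physical bound \eqref{eq:range_bound} and then uses integrality of $k_{ij}$. Suppose the source is at some position $\mathbf{p}$ that produces the wrapped measurement $\hat{z}_{ij}$. By \eqref{eq:alias_index} there is an integer $k_{ij}$ with $\Delta\rho_{ij}(\mathbf{p})=\hat{z}_{ij}+k_{ij}\lambda$. Substituting this into \eqref{eq:range_bound}, which follows from the reverse triangle inequality $\bigl|\|\mathbf{p}-\mathbf{r}_i\|-\|\mathbf{p}-\mathbf{r}_j\|\bigr|\le\|\mathbf{r}_i-\mathbf{r}_j\|$, gives
\begin{equation*}
-b_{ij}\le \hat{z}_{ij}+k_{ij}\lambda\le b_{ij}.
\end{equation*}
Since $\lambda=c/f_0>0$, we can subtract $\hat{z}_{ij}$ and divide by $\lambda$ without reversing the inequalities. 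This yields the real interval condition
\begin{equation*}
\frac{-b_{ij}-\hat{z}_{ij}}{\lambda}\le k_{ij}\le \frac{b_{ij}-\hat{z}_{ij}}{\lambda}.
\end{equation*}

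Next we use that $k_{ij}\in\mathbb{Z}$. An integer $k$ satisfies $k\ge a$ if and only if $k\ge\lceil a\rceil$, and $k\le b$ if and only if $k\le\lfloor b\rfloor$. The interval condition is therefore equivalent to $k_{ij}^{\min}\le k_{ij}\le k_{ij}^{\max}$ with the endpoints in \eqref{eq:k_minmax}. This shows that every physically feasible alias index lies in $\mathcal{K}_{ij}$.

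The set is finite because it is the set of integers in a bounded interval. Its length is $2b_{ij}/\lambda$, so it contains at most $\lfloor 2b_{ij}/\lambda\rfloor+1$ elements. Its cardinality equals $k_{ij}^{\max}-k_{ij}^{\min}+1$ whenever $k_{ij}^{\max}\ge k_{ij}^{\min}$, and $0$ otherwise. This gives \eqref{eq:num_aliases}.

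The only subtle point is interpreting ``physically feasible.'' The proposition asserts a necessary condition, namely a superset of the indices some position $\mathbf{p}$ could actually realize, and it does not claim every index in $\mathcal{K}_{ij}$ is attained. The derivation above needs only this direction, so I would state the claim as containment. As a remark, one could add that in $d\ge2$ every value in $[-b_{ij},b_{ij}]$ is attained by $\Delta\rho_{ij}$ (sweep $\mathbf{p}$ along the line through the receivers and use continuity). With that observation each index in $\mathcal{K}_{ij}$ is realized by some $\mathbf{p}$, so the set is exactly the feasible set. The remaining technical care is just the convention that $\mathrm{modulo}_\lambda$ lands in $[-\lambda/2,\lambda/2)$; this plays no role beyond making $\hat{z}_{ij}$ a fixed real number.
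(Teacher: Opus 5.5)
Your proposal is correct and follows the paper's proof step for step. Like the paper, you substitute the alias relation \eqref{eq:alias_index} into the bound \eqref{eq:range_bound}, subtract $\hat{z}_{ij}$, divide by $\lambda>0$, and use integrality of $k_{ij}$ to pass to the ceiling and floor. Your extra remark that $\Delta\rho_{ij}$ attains every value in $[-b_{ij},b_{ij}]$ (already on the segment joining the two receivers) is a small but useful addition: it justifies reading $N_{ij}$ as the exact number of realizable branches rather than only an upper bound, a point the paper's proof passes over.
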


\begin{proof}
Combining the physical bound in~\eqref{eq:range_bound} with the alias relation in~\eqref{eq:alias_index} gives
\begin{equation}
    -b_{ij}
    \leq
    \hat{z}_{ij}+k_{ij}\lambda
    \leq
    b_{ij}.
    \label{eq:alias_bound}
\end{equation}
Subtracting $\hat{z}_{ij}$ and dividing by the positive wavelength $\lambda$ yields
\begin{equation*}
    \frac{-b_{ij}-\hat{z}_{ij}}{\lambda}
    \leq
    k_{ij}
    \leq
    \frac{b_{ij}-\hat{z}_{ij}}{\lambda}.
\end{equation*}
Since $k_{ij}$ must be an integer, the smallest and largest feasible values are given by~\eqref{eq:k_minmax}. Thus, the feasible alias set is finite and contains the number of indices stated in~\eqref{eq:num_aliases}.
\end{proof}

For each $k\in\mathcal{K}_{ij}$, the level set
\begin{equation}
    \mathcal{H}_{ij}^{(k)}
    =
    \left\{
    \mathbf{p}\in\mathbb{R}^{d}:
    \Delta\rho_{ij}(\mathbf{p})
    =
    \hat{z}_{ij}+k\lambda
    \right\}
    \label{eq:alias_branch}
\end{equation}
defines one feasible hyperbola branch in two dimensions or one hyperboloid sheet in three dimensions, except at degenerate limiting values. Thus, the observed range difference from each distributed receiver pair is associated with a family of feasible branches instead of a unique one. Although the number of branches is finite, each branch contains a continuous set of source positions; consequently, a single aliased pair is insufficient for unique localization. Since $N_{ij}$ scales approximately with $2b_{ij}/\lambda$, widely separated receivers or higher-frequency signals produce more severe spatial aliasing.

\begin{remark}[Unaliased condition]
\label{rem:unaliased}
If $b_{ij}<\lambda/2$, the physically feasible range $[-b_{ij},b_{ij}]$ lies strictly inside the principal phase interval $[-\lambda/2,\lambda/2)$. In this case, $\mathcal{K}_{ij}=\{0\}$, and the measurement $\hat{z}_{ij}$ equals the true range difference, \ie, it is unaliased.
\end{remark}

\vspace{1mm}
\noindent
\textbf{Multipath model.} To account for the shallow-water multipath effect, we adopt the first-order image-source reflection model from~\cite{abdullah2025active}. For floating scenarios, we include the water-surface reflection. For submerged scenarios, we include both water-surface and bed reflections. The multipath-affected range difference is modeled as
\begin{equation}
    \Delta\rho_{ij}^{(\mathrm{mp})}
    =
    \Delta\rho_{ij}
    +
    \alpha_{\mathrm{s}}\,
    \delta_{ij}^{(\mathrm{s})}
    +
    \alpha_{\mathrm{b}}\,
    \delta_{ij}^{(\mathrm{b})},
    \label{eq:multipath}
\end{equation}
where $\delta_{ij}^{(\mathrm{s})}$ and $\delta_{ij}^{(\mathrm{b})}$ are the range differences associated with the surface and bed image sources, respectively, and $\alpha_{\mathrm{s}}$ and $\alpha_{\mathrm{b}}$ are the corresponding reflection coefficients.

\subsection{Prior Works \& Motivation for Mobile Sensing}

Large-area SSL is typically performed by networking multiple local arrays and fusing their direction or location estimates through distributed optimization, regression, or graph-based models~\cite{albertini2025diffusion,hahmann2022sound,feng2023soft,lai2024two,ajdler2004acoustic}. Learning-based methods further employ spectral and spatial features extracted from multichannel array observations~\cite{yan2023indoor,liu2025sound,laufer2017semi,grinstein2023graph}. However, these approaches generally assume a compact array at each sensing location and signals with sufficient bandwidth. Deploying such arrays at every node of a large offshore facility substantially increases sensor count, synchronization complexity, cost, and maintenance requirements~\cite{abdullah2025active}. Distributed single receivers provide a scalable alternative, but their large separation relative to $\lambda$ produces severe spatial aliasing for tonal signals. Moreover, the received signal quality varies with the source location, attenuation, and propagation conditions.

These limitations motivate the integration of a mobile sensor in a distributed sensing architecture. The mobile robot utilizes static receivers with high SNRs to form anchor pairs that provide necessary geometric constraints to restrict the feasible solutions. Measurements from the remaining distributed receiver pairs are then incorporated through alias-aware residuals and channel-quality weights, enabling scalable localization and tracking of adversarial acoustic sources.

\section{Localization and Tracking Method}
\label{sec:method}
The proposed framework operates in two stages. First, a mobile receiver forms short-baseline anchor pairs with selected static receivers to estimate an initial source position under phase-aliased conditions. Second, a particle filter is initialized around this estimate and jointly tracks the source position and velocity using one live anchor measurement and quality-weighted measurements from the distributed receiver network. Both stages rely on a quality-adaptive weighting of the distributed receiver pairs, which we introduce first. Fig.~\ref{fig:pipeline} shows an overview of the framework.

\subsection{Quality-Adaptive Pair Weighting}
\label{subsec:adaptive_weighting}
Distributed receivers experience different signal qualities due to propagation loss, orientation, occlusion, and local interference. A degraded channel affects every TDOA pair containing that receiver; therefore, assigning equal confidence to all pairs can allow one poor channel to corrupt a large portion of the measurement set.
To address this, we estimate the received signal quality at each channel and prioritize the channels accordingly. For each receiver $i$, a narrowband SNR $\widehat{\mathrm{SNR}}_i$ is estimated around the dominant source frequency $f_0$. The receiver quality is mapped to $[q_{\min},1]$ using
\begin{equation}
    q_i = q_{\min} +
    \frac{1-q_{\min}}
    {1+ \exp\left[-\alpha
    \left(\widehat{\mathrm{SNR}}_i-\mathrm{SNR}_0\right)
    \right]
    },
    \label{eq:receiver_quality}
\end{equation}
where $\alpha$ controls the transition slope and $\mathrm{SNR}_0$ is the midpoint of the quality mapping.
We define the reliability of receiver pair $(i,j)$ as $q_{ij}=\min(q_i,q_j)$, reflecting that the quality of a pair is limited by its poorer constituent channel. These weights are recomputed for every audio frame and are used in both initial candidate scoring (Sec.~\ref{subsec:candidate_scoring}) and temporal tracking (Sec.~\ref{sec:temporal_tracking}). Anchor measurements retain unit weight, since they provide the unaliased geometric constraints required by the estimator.

\subsection{Stage 1: Initial Localization}
\label{subsec:initial_localization}

The goal of Stage 1 is to resolve the alias ambiguity described in Sec.~\ref{subsec:tdoa_aliasing} and obtain an initial source-position estimate $\hat{\mathbf{p}}_{\mathrm{init}}$. During this stage, the source is assumed stationary while the robot sequentially visits the selected static receivers to form the required anchor pairs. It proceeds in three steps: anchor measurements reduce the feasible solution space to a 1D curve (Sec.~\ref{subsec:anchor_constraint}), this curve is traced and sampled (Sec.~\ref{subsec:curve_tracing}), and the samples are scored and refined using the aliased fixed-pair measurements (Sec.~\ref{subsec:candidate_scoring}).

\subsubsection{Anchor Constrained Solution Space}
\label{subsec:anchor_constraint}
Let $A$ anchor pairs be available in a $d$-dimensional environment. For anchor pair $a\in\{1,\dots,A\}$, let $\mathbf{r}_a$ denote the position of the static anchor receiver and $\mathbf{r}_{m,a}$ the position of the co-located mobile receiver. The anchor baseline is $b_a = \|\mathbf{r}_{m,a}-\mathbf{r}_a\|$, and the anchor range difference is
\begin{equation}
    \Delta\rho_a(\mathbf{p})
    =
    \|\mathbf{p}-\mathbf{r}_a\|
    -
    \|\mathbf{p}-\mathbf{r}_{m,a}\|.
    \label{eq:anchor_range_difference}
\end{equation}
When $b_a<\lambda/2$, Remark~\ref{rem:unaliased} guarantees that $|\Delta\rho_a(\mathbf{p})|\leq b_a<\lambda/2$ for any source position, and the anchor measurement is therefore unaliased. We model it as
\begin{equation}
    \hat{d}_a
    =
    \Delta\rho_a(\mathbf{p})+\eta_a,
    \qquad
    \eta_a\sim\mathcal{N}(0,\sigma_a^2),
    \label{eq:anchor_meas}
\end{equation}
where $\eta_a$ captures acoustic measurement error and uncertainty in the mobile-receiver pose. Each measurement constrains the source to the manifold
\begin{equation}
    \mathcal{S}_a
    =
    \left\{
    \mathbf{p}\in\mathbb{R}^{d}:
    \Delta\rho_a(\mathbf{p})=\hat{d}_a
    \right\}.
    \label{eq:anchor_surface}
\end{equation}
We keep the anchor baseline close to, but below, $\lambda/2$ to retain unaliased measurements while preserving geometric diversity in the TDOA observations.

\vspace{1mm}
\noindent
\textbf{Anchor-consistent solution set.}
To characterize the set of source positions consistent with all anchor measurements, we define the residual function of anchor pair $a$ as
\begin{equation}
    f_a(\mathbf{p})
    =
    \Delta\rho_a(\mathbf{p})
    -
    \hat{d}_a,
    \label{eq:anchor_constraint}
\end{equation}
and stack all anchor residuals into the constraint vector
\begin{equation}
    \mathbf{f}(\mathbf{p})
    =
    \begin{bmatrix}
        f_1(\mathbf{p}) &
        \cdots &
        f_A(\mathbf{p})
    \end{bmatrix}^{\top}.
    \label{eq:constraint_vector}
\end{equation}
The anchor-consistent set is the intersection of the individual anchor manifolds:
\begin{equation}
    \mathcal{C}_A
    =
    \left\{
    \mathbf{p}\in\mathbb{R}^{d}:
    \mathbf{f}(\mathbf{p})=\mathbf{0}
    \right\}
    =
    \bigcap_{a=1}^{A}\mathcal{S}_a.
    \label{eq:anchor_set}
\end{equation}
The gradient of a single anchor constraint is the difference of two unit vectors pointing from the receivers toward the source,
\begin{equation}
    \nabla_{\mathbf{p}}f_a(\mathbf{p})
    =
    \frac{\mathbf{p}-\mathbf{r}_a}
    {\|\mathbf{p}-\mathbf{r}_a\|}
    -
    \frac{\mathbf{p}-\mathbf{r}_{m,a}}
    {\|\mathbf{p}-\mathbf{r}_{m,a}\|},
    \label{eq:anchor_gradient}
\end{equation}
and stacking these row vectors yields the anchor Jacobian
\begin{equation}
    \mathbf{J}(\mathbf{p})
    =
    \nabla_{\mathbf{p}}\mathbf{f}(\mathbf{p})
    \in
    \mathbb{R}^{A\times d}.
    \label{eq:anchor_jacobian}
\end{equation}

\begin{proposition}[Required number of independent anchors]
\label{prop:anchor_dimension}
Suppose $\mathbf{p}^{\star}$ satisfies $\mathbf{f}(\mathbf{p}^{\star})=\mathbf{0}$ and the anchor Jacobian has full row rank,
\begin{equation}
    \operatorname{rank}
    \left(
    \mathbf{J}(\mathbf{p}^{\star})
    \right)
    =
    A.
    \label{eq:full_row_rank}
\end{equation}
Then, in a neighborhood of $\mathbf{p}^{\star}$, the anchor-consistent set $\mathcal{C}_A$ is a smooth manifold of dimension $\dim(\mathcal{C}_A) = d-A$. Therefore, $A=d-1$ independent anchor constraints are required to reduce the feasible source set to a one-dimensional candidate manifold.
\end{proposition}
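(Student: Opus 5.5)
The argument is a direct application of the regular value (implicit function) theorem to $\mathbf{f}:\mathbb{R}^d\to\mathbb{R}^A$ near $\mathbf{p}^\star$.

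\textbf{Smoothness of $\mathbf{f}$.} Each residual $f_a$ in~\eqref{eq:anchor_constraint} is a difference of Euclidean norms minus a constant. It is therefore $C^\infty$ on the open set $U=\mathbb{R}^d\setminus\{\mathbf{r}_a,\mathbf{r}_{m,a}:a=1,\dots,A\}$, with gradient~\eqref{eq:anchor_gradient}. I will note that $\mathbf{p}^\star\in U$ is implicit in the hypothesis: the rank condition~\eqref{eq:full_row_rank} presupposes that $\mathbf{J}(\mathbf{p}^\star)$ is defined, so the source does not coincide with a receiver. Hence $\mathbf{f}$ is $C^\infty$ on a neighborhood of $\mathbf{p}^\star$.

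\textbf{Local normal form.} Rank is lower semicontinuous and $\mathbf{J}$ is continuous, so $\operatorname{rank}\mathbf{J}(\mathbf{p})=A$ on some neighborhood $V\subset U$ of $\mathbf{p}^\star$. Choose $A$ columns of $\mathbf{J}(\mathbf{p}^\star)$ forming an invertible $A\times A$ minor, and split the coordinates as $\mathbf{p}=(\mathbf{u},\mathbf{w})$ with $\mathbf{u}\in\mathbb{R}^A$ corresponding to those columns and $\mathbf{w}\in\mathbb{R}^{d-A}$. The implicit function theorem then gives a smooth map $g$ on a neighborhood $W$ of $\mathbf{w}^\star$ such that
\[
\mathcal{C}_A\cap V'=\{(g(\mathbf{w}),\mathbf{w}):\mathbf{w}\in W\}
\]
for some smaller neighborhood $V'$. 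This graph is an embedded $C^\infty$ submanifold of dimension $d-A$. Its tangent space at $\mathbf{p}^\star$ is $\ker\mathbf{J}(\mathbf{p}^\star)$, whose dimension $d-A$ follows from rank–nullity, which confirms the count.

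\textbf{Consequence for the number of anchors.} Setting $d-A=1$ gives $A=d-1$: one anchor in 2D leaves a curve (a hyperbola branch), and two independent anchors in 3D leave the intersection curve of two hyperboloid sheets. I will also note that full row rank requires $A\le d$. With fewer independent anchors the local dimension is at least $2$ and the feasible set is not a curve. With $A=d$ the set is locally a point, but that case is ruled out for a different reason: the estimator needs a one-dimensional set along which to sample candidates. The "required" claim should therefore be read as "exactly $d-1$ for a 1D manifold."

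\textbf{Main obstacle.} The analysis is routine; the work is in stating the result precisely. The claim is only local: globally, $\mathcal{C}_A$ may have several components or degenerate points where the rank drops, for example when $\mathbf{p}$ is collinear with an anchor baseline. At such points the two unit vectors in~\eqref{eq:anchor_gradient} coincide, so the gradient vanishes. I would add a remark that the conclusion holds only on the regular part of $\mathcal{C}_A$, and that guaranteeing the rank condition is exactly the "suitable receiver geometry" assumption.
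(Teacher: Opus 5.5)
Your proposal is correct and takes the same route as the paper. The paper simply invokes the implicit function theorem for $\mathbf{f}:\mathbb{R}^d\to\mathbb{R}^A$ under the full-row-rank condition and reads off $\dim(\mathcal{C}_A)=d-\operatorname{rank}(\mathbf{J})=d-A$; you supply the details it leaves implicit (smoothness away from the receivers, the graph normal form, $\ker\mathbf{J}(\mathbf{p}^\star)$ as the tangent space). One small correction to your closing remark: the two unit vectors in~\eqref{eq:anchor_gradient} coincide, so the gradient vanishes, only when $\mathbf{p}$ lies on the baseline line \emph{outside} the segment between $\mathbf{r}_a$ and $\mathbf{r}_{m,a}$ (where $\Delta\rho_a=\pm b_a$); between the two receivers the unit vectors are opposite and the gradient is nonzero.
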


\begin{proof}
Under the full-row-rank condition in~\eqref{eq:full_row_rank}, the implicit function theorem applies to $\mathbf{f}:\mathbb{R}^{d}\rightarrow\mathbb{R}^{A}$. Its zero-level set is locally a smooth manifold whose dimension equals the ambient dimension minus the rank of the constraint Jacobian. Hence,
\begin{equation*}
    \dim(\mathcal{C}_A)
    =
    d-\operatorname{rank}(\mathbf{J})
    =
    d-A.
\end{equation*}
Setting $\dim(\mathcal{C}_A)=1$ gives $A=d-1$.
\end{proof}

Proposition~\ref{prop:anchor_dimension} yields the anchor requirements used throughout this report:
\begin{itemize}[leftmargin=*, nosep]
    \item In 2D, a single anchor constraint ($A=1$) yields a one-dimensional hyperbola branch, $\mathcal{C}=\mathcal{S}_1$.
    \item In 3D, two independent anchor constraints ($A=2$) intersect along a one-dimensional spatial curve, $\mathcal{C}=\mathcal{S}_1\cap\mathcal{S}_2$.
\end{itemize}
We hereafter use $\mathcal{C}=\mathcal{C}_{d-1}$ for this candidate curve. The anchor constraints alone cannot determine where the source lies along $\mathcal{C}$. The remaining aliased fixed-pair measurements are used to resolve this, after the curve is traced and sampled.

\subsubsection{Intersection-Curve Tracing}
\label{subsec:curve_tracing}
Before the fixed-pair measurements can be used to locate the source on $\mathcal{C}$, the curve must be represented by a discrete set of points. Since $\mathcal{C}$ is defined only implicitly by $\mathbf{f}(\mathbf{p})=\mathbf{0}$, we obtain these points through a seed search followed by predictor-corrector continuation along the curve. The procedure below is stated for the general case of $A=d-1$ anchor constraints, so that $\mathbf{f}:\mathbb{R}^{d}\rightarrow\mathbb{R}^{d-1}$ and $\mathbf{J}(\mathbf{p})\in\mathbb{R}^{(d-1)\times d}$.

\vspace{1mm}
\noindent
\textbf{Seed identification and Newton projection.}
We first draw $M$ candidate points $\{\mathbf{p}_m\}_{m=1}^{M}$ uniformly from the monitored region $\mathcal{B}\subset\mathbb{R}^{d}$ and select the one that best satisfies the anchor constraints:
\begin{equation}
    \mathbf{p}_{\mathrm{seed}}
    =
    \argmin_{\mathbf{p}_m}
    \left\|
    \mathbf{f}(\mathbf{p}_m)
    \right\|_2^2.
    \label{eq:seed_search}
\end{equation}
The sampled seed generally lies near, but not exactly on, the curve. Linearizing the constraints around the current point gives
\begin{equation}
    \mathbf{f}(\mathbf{p}+\delta\mathbf{p})
    \approx
    \mathbf{f}(\mathbf{p})
    +
    \mathbf{J}(\mathbf{p})\,\delta\mathbf{p},
    \label{eq:constraint_linearization}
\end{equation}
and the smallest correction that drives this linearized residual to zero is the damped minimum-norm step
\begin{equation}
    \delta\mathbf{p}
    =
    -
    \mathbf{J}^{\top}
    \left(
    \mathbf{J}\mathbf{J}^{\top}
    +
    \mu\mathbf{I}
    \right)^{-1}
    \mathbf{f}(\mathbf{p}),
    \label{eq:newton_projection}
\end{equation}
where $\mu>0$ regularizes nearly singular anchor geometries, \ie, when the anchor gradients in~\eqref{eq:anchor_gradient} are close to parallel. Repeating $\mathbf{p}\leftarrow\mathbf{p}+\delta\mathbf{p}$ until $\|\mathbf{f}(\mathbf{p})\|_2<\epsilon_f$ projects the point onto $\mathcal{C}$, where $\epsilon_f$ is the projection tolerance. Because the step has minimum norm, the projection moves the point orthogonally toward the curve rather than sliding it along the curve.

\vspace{1mm}
\noindent
\textbf{Predictor-corrector tracing.}
At a regular point on $\mathcal{C}$, the tangent direction $\mathbf{t}\in\mathbb{R}^{d}$ satisfies $\mathbf{J}(\mathbf{p})\,\mathbf{t}=\mathbf{0}$. Under the full-rank condition of Proposition~\ref{prop:anchor_dimension}, this null space is one-dimensional, and the unit tangent is the right singular vector of $\mathbf{J}$ associated with its smallest singular value. In 3D, this is equivalent to
\begin{equation}
    \mathbf{t}
    \propto
    \nabla f_1(\mathbf{p})
    \times
    \nabla f_2(\mathbf{p}),
    \label{eq:tangent_cross_product}
\end{equation}
while in 2D the tangent is the single anchor gradient rotated by $90^{\circ}$, \ie, $\mathbf{t}\propto[-\partial_y f_1,\ \partial_x f_1]^{\top}$.
Given the current curve point $\mathbf{p}_n$, a predictor step advances along the tangent,
\begin{equation}
    \widetilde{\mathbf{p}}_{n+1}
    =
    \mathbf{p}_n
    +
    s\,h\,\mathbf{t}_n,
    \qquad s\in\{-1,+1\},
    \label{eq:predictor}
\end{equation}
where $h$ is the tracing step size and $s$ selects the tracing direction. The corrector then projects $\widetilde{\mathbf{p}}_{n+1}$ back onto $\mathcal{C}$ using~\eqref{eq:newton_projection}, yielding $\mathbf{p}_{n+1}$. Because the singular-vector computation determines $\mathbf{t}_n$ only up to sign, we enforce $\mathbf{t}_n^{\top}\mathbf{t}_{n-1}\geq 0$ by flipping $\mathbf{t}_n$ when necessary, which prevents the trace from reversing direction.

We trace the curve in both directions from $\mathbf{p}_{\mathrm{seed}}$ and terminate each direction when the corrected point leaves $\mathcal{B}$, when the projection fails to reach $\epsilon_f$, or after $L_{\max}$ steps. The resulting set of curve samples, denoted $\mathcal{P}=\{\mathbf{p}_n\}$, is a discrete representation of $\mathcal{C}$ within the monitored region. This reduces the search from the full $d$-dimensional region to a one-dimensional set of candidates.

\subsubsection{Candidate Scoring and Alias-Aware Refinement}
\label{subsec:candidate_scoring}
Every sample in $\mathcal{P}$ satisfies the anchor constraints within the projection tolerance $\epsilon_f$. Hence, the task is to determine which sample is most consistent with the aliased fixed-receiver measurements.

\begin{algorithm}[!h]
\caption{Stage 1: Intersection-Curve Tracing and Alias-Aware Refinement}
\label{alg:curve_refinement}
\small
\begin{algorithmic}[1]
\REQUIRE Anchor constraints $\mathbf{f}(\mathbf{p})$; fixed-pair measurements $\{\hat{z}_{ij}\}_{(i,j)\in\mathcal{M}}$ and weights $\{q_{ij}\}$; wavelength $\lambda$; region $\mathcal{B}$; parameters $M, h, L_{\max}, \epsilon_f, K, I_{\max}$
\ENSURE Initial source-position estimate $\hat{\mathbf{p}}_{\mathrm{init}}$
\STATE \codecomment{ Seed identification}
\STATE Draw $M$ points $\{\mathbf{p}_m\}_{m=1}^{M}$ uniformly from $\mathcal{B}$
\STATE $\mathbf{p}_{\mathrm{seed}}\leftarrow\argmin_{\mathbf{p}_m}\|\mathbf{f}(\mathbf{p}_m)\|_2^2$ \hfill\eqref{eq:seed_search}
\STATE Project $\mathbf{p}_{\mathrm{seed}}$ onto $\mathcal{C}$ via~\eqref{eq:newton_projection} until $\|\mathbf{f}\|_2<\epsilon_f$
\STATE $\mathcal{P}\leftarrow\{\mathbf{p}_{\mathrm{seed}}\}$
\STATE \codecomment{ Predictor-corrector tracing}
\FOR{each direction $s\in\{-1,+1\}$}
    \STATE $\mathbf{p}\leftarrow\mathbf{p}_{\mathrm{seed}}$, \ $\mathbf{t}_{\mathrm{prev}}\leftarrow\emptyset$
    \FOR{$n=1$ to $L_{\max}$}
        \STATE $\mathbf{t}\leftarrow$ unit null-space vector of $\mathbf{J}(\mathbf{p})$
        \IF{$\mathbf{t}_{\mathrm{prev}}\neq\emptyset$ \AND $\mathbf{t}^{\top}\mathbf{t}_{\mathrm{prev}}<0$}
            \STATE $\mathbf{t}\leftarrow-\mathbf{t}$
        \ENDIF
        \STATE $\widetilde{\mathbf{p}}\leftarrow\mathbf{p}+s\,h\,\mathbf{t}$ \hfill\eqref{eq:predictor}
        \STATE Project $\widetilde{\mathbf{p}}$ onto $\mathcal{C}$ via~\eqref{eq:newton_projection}
        \IF{projection fails \OR $\widetilde{\mathbf{p}}\notin\mathcal{B}$}
            \STATE \textbf{break}
        \ENDIF
        \STATE $\mathcal{P}\leftarrow\mathcal{P}\cup\{\widetilde{\mathbf{p}}\}$, \ $\mathbf{p}\leftarrow\widetilde{\mathbf{p}}$, \ $\mathbf{t}_{\mathrm{prev}}\leftarrow\mathbf{t}$
    \ENDFOR
\ENDFOR
\STATE \codecomment{ Candidate scoring}
\STATE Compute $S(\mathbf{p}_n)$ for all $\mathbf{p}_n\in\mathcal{P}$ \hfill\eqref{eq:candidate_score}
\STATE Keep the top-$K$ samples as candidates $\{\mathbf{p}^{(0)}_k\}_{k=1}^{K}$
\STATE \codecomment{ Alias-aware refinement}
\FOR{$k=1$ to $K$}
    \STATE $\mathbf{p}\leftarrow\mathbf{p}^{(0)}_k$
    \FOR{$\ell=1$ to $I_{\max}$}
        \STATE Compute $k_{ij}^{\star}(\mathbf{p})$ and $r_{ij}(\mathbf{p})$ for all $(i,j)\in\mathcal{M}$ \hfill\eqref{eq:nearest_alias}, \eqref{eq:fixed_residual}
        \STATE Compute $r_a(\mathbf{p})$ for all anchors \hfill\eqref{eq:anchor_residual}
        \STATE Stack residuals into $\mathbf{r}(\mathbf{p})$ and Jacobian $\mathbf{J}_r$
        \STATE Solve~\eqref{eq:gauss_newton} for $\delta\mathbf{p}$; \ $\mathbf{p}\leftarrow\mathbf{p}+\delta\mathbf{p}$
        \IF{$\|\delta\mathbf{p}\|$ is negligible}
            \STATE \textbf{break}
        \ENDIF
    \ENDFOR
    \STATE $\mathbf{p}_k\leftarrow\mathbf{p}$, \ $J_k\leftarrow\|\mathbf{r}(\mathbf{p}_k)\|_2^2$
\ENDFOR
\RETURN $\hat{\mathbf{p}}_{\mathrm{init}}\leftarrow\mathbf{p}_{k^{\star}}$, \ $k^{\star}=\argmin_k J_k$
\end{algorithmic}
\end{algorithm}

\vspace{1mm}
\noindent
\textbf{Candidate scoring.}
For a distributed fixed-receiver pair $(i,j)$, we define the alias-aware residual
\begin{equation}
    e_{ij}(\mathbf{p})
    =
    \mathrm{modulo}_{\lambda}
    \left(
    \Delta\rho_{ij}(\mathbf{p})-\hat{z}_{ij}
    \right),
    \label{eq:alias_residual}
\end{equation}
which measures the distance from the predicted range difference to the nearest feasible alias branch in~\eqref{eq:alias_branch}, without requiring the alias index $k_{ij}$ to be known. Each curve sample $\mathbf{p}_n\in\mathcal{P}$ is assigned the score
\begin{equation}
    S(\mathbf{p}_n)
    =
    \sum_{(i,j)\in\mathcal{M}}
    q_{ij}
    \exp\left(
    -\frac{e_{ij}^{2}(\mathbf{p}_n)}
    {2\sigma_{\rho}^{2}}
    \right),
    \label{eq:candidate_score}
\end{equation}
where $\mathcal{M}$ is the set of available fixed-receiver pairs, $\sigma_\rho$ is the fixed-pair measurement uncertainty, and $q_{ij}$ is the pair reliability from Sec.~\ref{subsec:adaptive_weighting}. The bounded Gaussian kernel limits the influence of any single inconsistent pair, so a few corrupted measurements cannot dominate the score. Anchor residuals are excluded from~\eqref{eq:candidate_score} because all curve samples already satisfy the anchor constraints. Since a wrapped residual can be small on several alias branches simultaneously, the score may contain multiple local maxima along $\mathcal{C}$. We therefore keep the $K$ highest-scoring samples as candidates rather than committing to the single best sample.

\vspace{1mm}
\noindent
\textbf{Nearest-alias assignment.}
Each candidate is refined by a local weighted nonlinear least-squares optimization. To obtain a differentiable objective, each fixed-pair measurement is locally unwrapped onto the nearest alias branch 
$\mathbf{p}$:
\begin{equation}
    k_{ij}^{\star}(\mathbf{p})
    =
    \operatorname{round}
    \left(
    \frac{
    \Delta\rho_{ij}(\mathbf{p})
    -
    \hat{z}_{ij}
    }{\lambda}
    \right),
    \label{eq:nearest_alias}
\end{equation}
with the corresponding unwrapped target
\begin{equation}
    d_{ij}^{\star}(\mathbf{p})
    =
    \hat{z}_{ij}
    +
    k_{ij}^{\star}(\mathbf{p})\,\lambda.
    \label{eq:local_target}
\end{equation}
By construction, $\Delta\rho_{ij}(\mathbf{p})-d_{ij}^{\star}(\mathbf{p})$ equals the wrapped residual $e_{ij}(\mathbf{p})$ in~\eqref{eq:alias_residual}. The refinement therefore minimizes the same quantity used for scoring, and the alias index in~\eqref{eq:nearest_alias} is selected implicitly rather than enumerated over $\mathcal{K}_{ij}$.

\begin{figure*}[t]
    \centering
    \includegraphics[width=\linewidth]{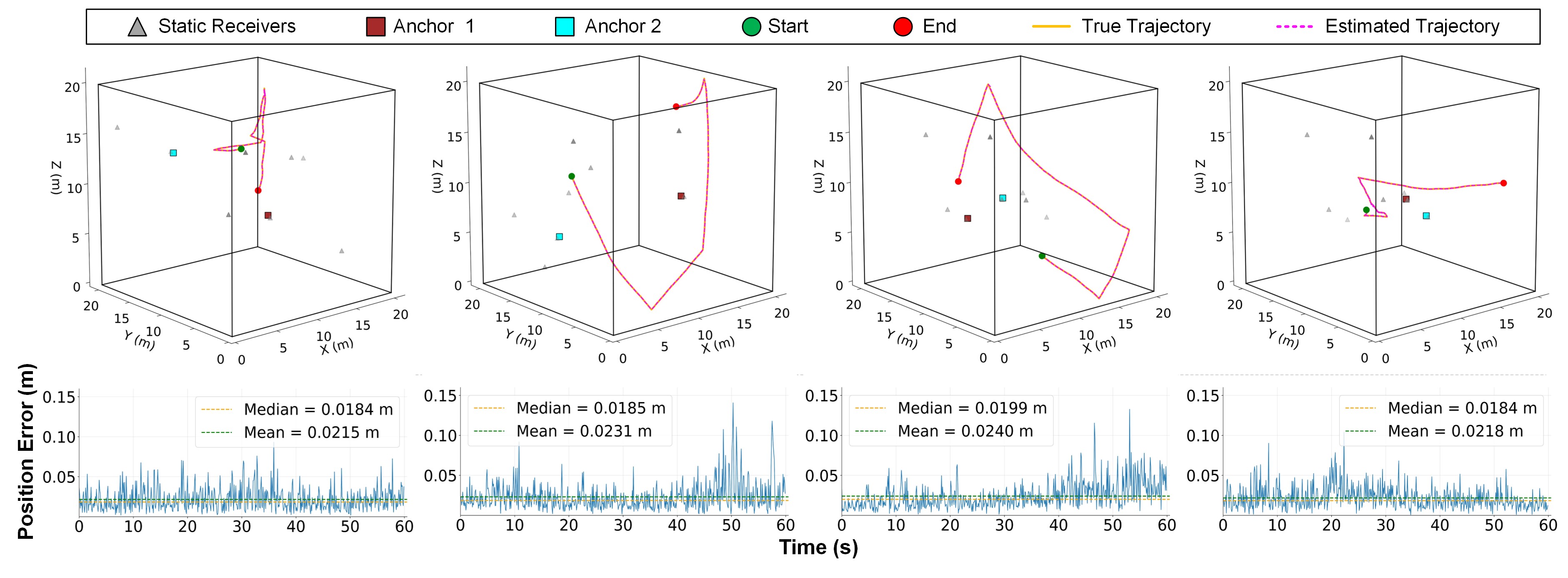}
    \vspace{-3mm}
    \caption{Monte Carlo acoustic simulation results are illustrated in a $20$\,m$\times$$20$\,m$\times$$20$\,m environment. The top row shows the true and estimated source trajectories for four randomized trials; the bottom row shows the corresponding tracking errors. More results are shown in Fig.~\ref{fig:sim_anchor_pairs} to demonstrate geometric robustness.}
    \vspace{-2mm}
    \label{fig:plots_3d}
\end{figure*}

\vspace{1mm}
\noindent
\textbf{Weighted Gauss-Newton refinement.}
For each fixed pair, the weighted residual is
\begin{equation}
    r_{ij}(\mathbf{p})
    =
    \frac{\sqrt{q_{ij}}}{\sigma_{\rho}}
    \left[
    \Delta\rho_{ij}(\mathbf{p})
    -
    d_{ij}^{\star}(\mathbf{p})
    \right],
    \label{eq:fixed_residual}
\end{equation}
and for each anchor pair $a$,
\begin{equation}
    r_a(\mathbf{p})
    =
    \frac{1}{\sigma_a}
    \left[
    \Delta\rho_a(\mathbf{p})
    -
    \hat{d}_a
    \right]
    =
    \frac{f_a(\mathbf{p})}{\sigma_a}.
    \label{eq:anchor_residual}
\end{equation}
Unlike scoring, the refinement includes the anchor residuals. This keeps the refined estimate close to $\mathcal{C}$ while allowing small deviations that absorb anchor-measurement noise. Stacking all fixed-pair and anchor residuals into $\mathbf{r}(\mathbf{p})$, each candidate solves
\begin{equation}
    \min_{\mathbf{p}}\;
    \frac{1}{2}
    \left\|
    \mathbf{r}(\mathbf{p})
    \right\|_2^2.
    \label{eq:refinement_objective}
\end{equation}
Since $k_{ij}^{\star}$ is piecewise constant in $\mathbf{p}$, the gradient of~\eqref{eq:fixed_residual} is simply $(\sqrt{q_{ij}}/\sigma_\rho)\,\nabla_{\mathbf{p}}\Delta\rho_{ij}(\mathbf{p})$, and the anchor rows follow from~\eqref{eq:anchor_gradient}. Let $\mathbf{J}_r$ denote the Jacobian of the stacked residual vector. The damped Gauss-Newton step solves
\begin{equation}
    \left(
    \mathbf{J}_r^{\top}\mathbf{J}_r
    +
    \gamma\mathbf{I}
    \right)
    \delta\mathbf{p}
    =
    -
    \mathbf{J}_r^{\top}\mathbf{r},
    \label{eq:gauss_newton}
\end{equation}
followed by the update
\begin{equation}
    \mathbf{p}
    \leftarrow
    \mathbf{p}
    +
    \delta\mathbf{p},
    \label{eq:gn_update}
\end{equation}
where $\gamma>0$ is a damping factor. The alias indices in~\eqref{eq:nearest_alias} are recomputed after every update, and the iteration runs for at most $I_{\max}$ steps or until $\|\delta\mathbf{p}\|$ becomes negligible.

After all $K$ candidates are refined, the one with the smallest final cost $\|\mathbf{r}(\mathbf{p})\|_2^2$ is selected as the initial source-position estimate $\hat{\mathbf{p}}_{\mathrm{init}}$. Algorithm~\ref{alg:curve_refinement} summarizes the complete Stage~1 procedure.
% the anchor constraints first reduce the search from the full monitored region to a sampled one-dimensional curve, the fixed-pair measurements then score the curve samples, and the top-$K$ candidates are refined with locally unwrapped measurements.

\begin{table*}[t]
    \centering
    \caption{Key configuration parameters used in the Monte Carlo simulations.}
    \vspace{-1mm}
    \label{tab:sim_config}
    \setlength{\tabcolsep}{5pt}
    \begin{tabular}{@{}l c l c@{}}
        \toprule
        \textbf{Parameter} & \textbf{Value}
        & \textbf{Parameter} & \textbf{Value} \\
        \midrule
        Number of trials
        & $100$
        & Environment size
        & $20\times20\times20$\,m$^3$ \\

        Static receivers
        & $8$
        & Source frequency
        & $1$\,kHz \\

        Anchor baseline
        & $0.475\lambda$
        & Sampling rate / frame
        & $44$\,kHz / $0.20$\,s \\

        Wrapped-pair noise, $\sigma_\rho$
        & $0.002$\,m
        & Anchor noise, $\sigma_a$
        & $0.001$\,m \\

        PF particles
        & $3000$
        & Position / velocity process noise
        & $0.3$\,m / $0.2$\,m/s \\

        Resampling threshold, $\beta$
        & $0.55$
        & Initialization seed samples
        & $10^6$ \\

        Curve-tracing budget
        & $800$ steps/direction
        & Top-$K$ refinement candidates
        & $10$ \\
        \bottomrule
    \end{tabular}
    \vspace{-2mm}
\end{table*}

\subsection{Stage 2: Temporal Tracking}
\label{sec:temporal_tracking}
After initial localization, the mobile receiver remains at the last anchor position ($a=A$) and continues to provide one live unaliased measurement $\hat{d}_{A,t}$ at each time step $t$. In this stage, a particle filter with $N_p$ particles estimates the source state $\mathbf{x}_t=[\mathbf{p}_t^\top,\mathbf{v}_t^\top]^\top$, allowing the source to move after initialization.

\vspace{1mm}
\noindent
\textbf{Initialization and motion model.}
Particle positions are drawn from a Gaussian cloud around the Stage~1 estimate, $\mathbf{p}_0^{(n)}\sim\mathcal{N}(\hat{\mathbf{p}}_{\mathrm{init}},\boldsymbol{\Sigma}_{p,0})$, while particle velocities are initialized near zero, $\mathbf{v}_0^{(n)}\sim\mathcal{N}(\mathbf{0},\boldsymbol{\Sigma}_{v,0})$. We adopt a constant-velocity transition model:
\begin{equation}
\begin{aligned}
    \mathbf{p}_t
    &=
    \mathbf{p}_{t-1}
    +
    \mathbf{v}_{t-1}\Delta t
    +
    \mathbf{w}_{p,t},\\
    \mathbf{v}_t
    &=
    \mathbf{v}_{t-1}
    +
    \mathbf{w}_{v,t},
\end{aligned}
\label{eq:pf_transition}
\end{equation}
where $\mathbf{w}_{p,t}$ and $\mathbf{w}_{v,t}$ are zero-mean Gaussian process noise terms.

\vspace{1mm}
\noindent
\textbf{Measurement likelihood.}
The measurement vector $\mathbf{z}_t$ contains the live anchor measurement $\hat{d}_{A,t}$ and the wrapped fixed-pair measurements $\{\hat{z}_{ij,t}\}_{(i,j)\in\mathcal{M}}$. Treating these measurements as conditionally independent given the source position, the likelihood of particle $n$ is
\begin{equation}
\begin{split}
    p\!\left(\mathbf{z}_t \mid \mathbf{x}_t^{(n)}\right)
    \propto{}&
    \exp\!\left(
    -\frac{\big(\hat{d}_{A,t}-\Delta\rho_A(\mathbf{p}_t^{(n)})\big)^{2}}{2\sigma_a^{2}}
    \right)\\
    &\times
    \prod_{(i,j)\in\mathcal{M}}
    \exp\!\left(
    -\frac{q_{ij,t}\,e_{ij}^{2}(\mathbf{p}_t^{(n)})}{2\sigma_\rho^{2}}
    \right),
\end{split}
\label{eq:pf_likelihood}
\end{equation}
where $e_{ij}(\cdot)$ is the alias-aware residual in~\eqref{eq:alias_residual} and $q_{ij,t}$ is the pair reliability from Sec.~\ref{subsec:adaptive_weighting}, recomputed from the current audio frame. The quality weight acts as a per-pair tempering exponent, so low-SNR pairs contribute flatter likelihood terms and have less influence on the particle weights. Because the wrapped residual is small on every feasible alias branch, the fixed pairs alone would admit multiple modes. The unaliased anchor term and the motion prior keep the particle cloud on the branch identified in Stage~1. Taking the logarithm of~\eqref{eq:pf_likelihood} recovers the refinement cost in~\eqref{eq:refinement_objective} restricted to the live anchor, so both stages evaluate measurements consistently.

\vspace{1mm}
\noindent
\textbf{Weight update and resampling.}
The particle weights are updated recursively as
\begin{equation}
    w_t^{(n)}
    \propto
    w_{t-1}^{(n)}
    \,
    p\!\left(
    \mathbf{z}_t
    \mid
    \mathbf{x}_t^{(n)}
    \right),
    \qquad
    \sum_{n=1}^{N_p} w_t^{(n)} = 1,
    \label{eq:pf_weight_update}
\end{equation}
and the state estimate is the weighted particle mean, $\hat{\mathbf{x}}_t=\sum_{n} w_t^{(n)}\mathbf{x}_t^{(n)}$. To prevent particle degeneracy, we compute the effective sample size $N_{\mathrm{eff}}=1/\sum_{n}\big(w_t^{(n)}\big)^2$ and apply systematic resampling whenever $N_{\mathrm{eff}}<\beta N_p$, where $\beta\in(0,1)$ is the resampling threshold.

\paragraph{Design Rationale.}
We considered several alternatives for resolving narrowband phase ambiguity. First, a dense three-dimensional grid search is attempted, which is globally exhaustive but computationally expensive and resolution-dependent. At the same time, a globally initialized particle filter requires an impractically large particle population to represent the many aliased modes. Second, graph-based phase-consistency methods can enforce compatibility among pairwise measurements, but they do not yield an efficient spatial search. Moreover, a single anchor reduces the feasible set only to a two-dimensional surface in 3D and therefore leaves an expensive manifold search. The adopted design instead uses two independent anchor constraints to reduce the initial search to a one-dimensional intersection curve, applies the wrapped fixed-pair measurements only to score and refine candidates on that curve, and initializes temporal filtering after the ambiguity has been resolved.

\section{Evaluation}

We evaluate the proposed localization and tracking framework through Monte Carlo acoustic simulations, ROS-Gazebo robotic simulations, and open-water field experiments. First, the acoustic evaluation examines localization and tracking performance across randomized receiver geometries, noise levels, and source motion. Next, we integrate the algorithm with mobile robots in floating and submerged infrastructure surveillance scenarios in the Gazebo\texttrademark\,simulator. We then present real-world results from lake trials, highlighting the practicality of the approach under noisy sensing conditions. Finally, we analyze sensitivity to measurement uncertainty and quantify the benefit of quality-adaptive pair weighting.

We use three performance metrics. The \emph{initial localization error} (ILE) is the Euclidean distance between the estimated and true source positions after completing Stage 1. The \emph{success rate} (SR) is the percentage of trials satisfying $\mathrm{ILE}<1\,\mathrm{m}$. During Stage 2, the \emph{tracking error} (TE) is the time-varying Euclidean distance between the estimated and true source positions. Note that TE is reported only for trials with successful initialization.

\begin{table}[h]
    \caption {The effect of the number of anchor constraints on localization and tracking performance is presented. ILE is reported over all trials as mean$\pm$std, whereas TE is reported only for successful trials satisfying $\mathrm{ILE}<1\,\mathrm{m}$.}
    \centering
    \small
    \begin{tabular}{c||c|c|c}
    \hline
    Approach      & SR (\%)    & ILE (m) & TE (m)  \\
    \hline
    No anchor   & $4\%$ & $11.96\pm4.85$ & $0.05\pm0.02$    \\
    One anchor  & $12\%$ & $9.92\pm5.59$ & $0.03\pm0.07$     \\
    \textbf{Two anchors} & $91\%$ & $0.72\pm2.01$ &  $0.02\pm0.03$ \\
    \hline
    \end{tabular}
    % \vspace{-2mm}
    \label{tab:baseline}
\end{table}

\begin{figure*}[p]
    \centering
    \includegraphics[width=0.9\linewidth]{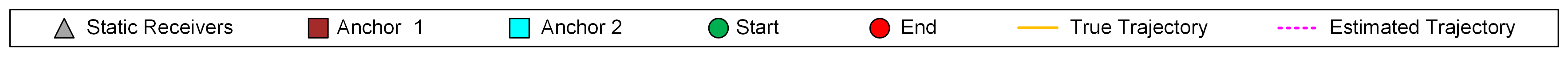}
    % \vspace{0.5cm}
    \includegraphics[width=0.92\linewidth]{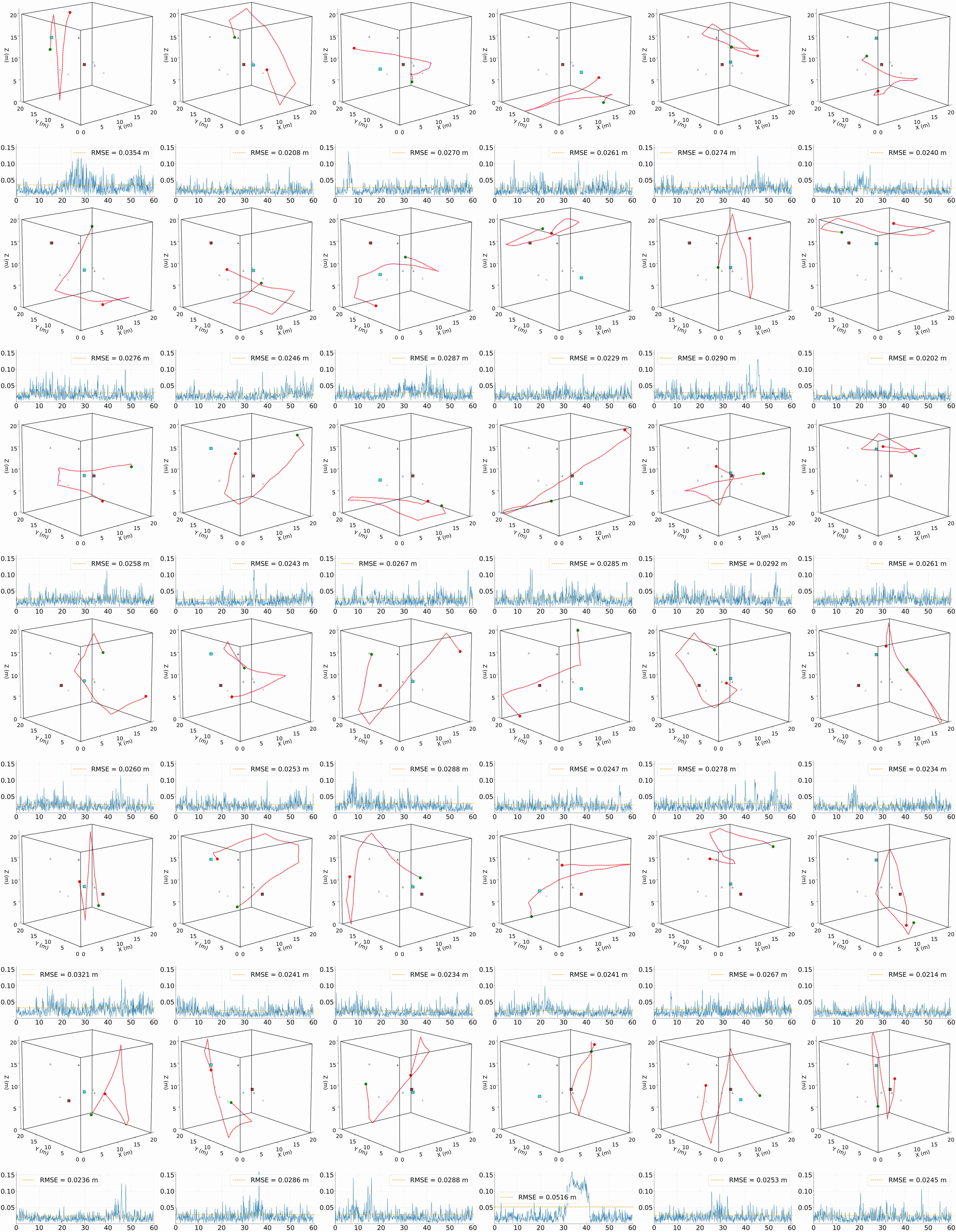}
    \caption{Localization and tracking results are illustrated for all $36$
    anchor-pair choices. Static-receiver locations, source initialization,
    and source-motion direction are randomized in each trial to demonstrate
    geometric robustness.}
    \label{fig:sim_anchor_pairs}
\end{figure*}

\begin{figure*}[t]
    \centering
    \includegraphics[width=\linewidth]{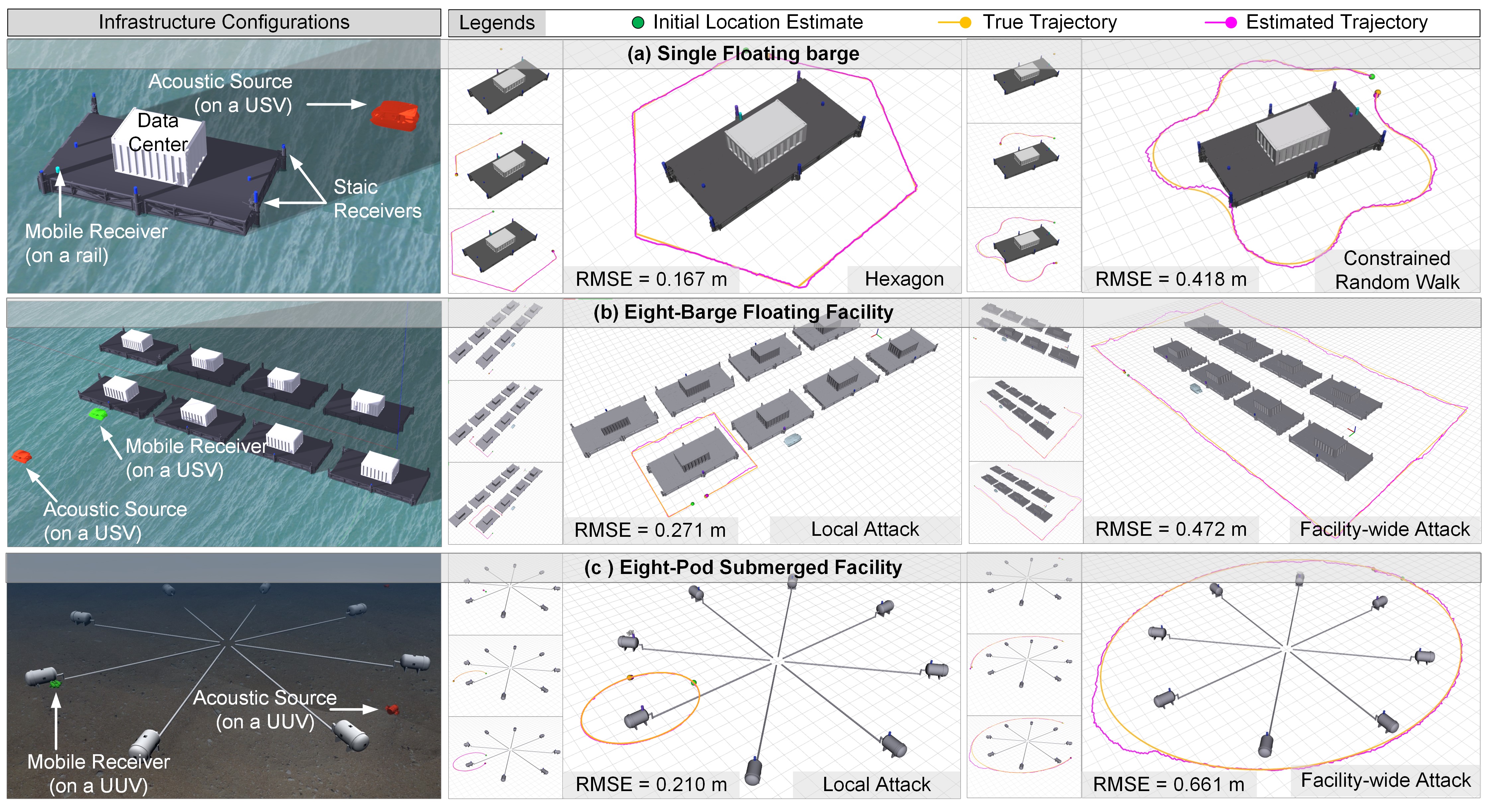}
    \vspace{-3mm}
    \caption{Three infrastructure surveillance scenarios are demonstrated: (a) single floating barge, (b) eight-barge facility, and eight-pod submerged facility. For the single-barge scenario, the mobile receiver moves along a rail while a USV-mounted acoustic source follows hexagonal and random walk trajectories. In the distributed floating and submerged scenarios, the mobile receiver is carried by a USV or UUV, respectively, and the source executes both local attacks around one unit or facility-wide attacks. Three intermediate snapshots during tracking are shown on the left of each result.}
    \vspace{-3mm}
    \label{fig:gazebo}
\end{figure*}

\subsection{Monte-Carlo Simulation}
We conduct acoustic simulations in a $20 \times 20 \times 20\,\mathrm{m}^3$ environment using \texttt{Pyroomacoustics}. Each Monte Carlo trial randomizes the locations of the eight static receivers, the selected anchor receivers, the initial source position, and the source-motion direction. The source remains stationary while the mobile receiver acquires the anchor measurements (Stage 1) and then follows a random 3D trajectory (Stage 2). This randomized setup evaluates robustness to receiver geometry and source motion. Fig.~\ref{fig:plots_3d} shows representative results from four randomly selected trials, highlighting stable tracking across diverse receiver placements and trajectory shapes. The corresponding tracking-error plots demonstrate $<5$\,cm steady-state error with limited drift. Table~\ref{tab:sim_config} summarizes the key simulation settings for reproducibility.

We next examine the role of the anchor constraints by comparing the proposed two-anchor configuration with one-anchor and no-anchor baselines. As explained in Sec.~\ref{subsec:anchor_constraint}, one anchor constrains the source only to a 2D surface, while two independent anchor constraints reduce the candidate set to a 1D intersection curve. Since multiple ambiguous solutions exist for fewer than two anchors, this leads to poor performance in the one- and no-anchor cases. As summarized in Table~\ref{tab:baseline}, the no-anchor and one-anchor baselines achieve SRs of only $4\%$ and $12\%$, respectively, since the solution space is under-constrained. In contrast, the proposed two-anchor configuration achieves an SR of $91\%$ and reduces the mean ILE to $0.72$\,m. Notably, 
this comparison answers RQ1, demonstrating that two independent anchor constraints substantially improve initialization reliability under narrowband phase ambiguity.

% the tracking error remains consistently low across all approaches ($<10$\,cm) for successful trials, indicating that resolving the initial phase-aliased solution space is the principal challenge.

\vspace{1mm}
\noindent
\textbf{Robustness to Anchor Selection.}
To isolate the effect of anchor choice, we fix a seven-receiver geometry and repeat localization and tracking for all $36$ possible ordered anchor pairs. The source initialization and motion direction are randomized as in the preceding trials. Fig.~\ref{fig:sim_anchor_pairs} shows that, after successful initialization, the estimated trajectories remain close to the ground truth and the tracking errors remain small across the tested anchor configurations. The result indicates that the method is not tied to one favorable anchor pair, although initialization quality can still vary with anchor geometry and measurement quality.

\subsection{Robotic Simulation in Gazebo-ROS}
We integrate the proposed algorithm with mobile robots in ROS1-Gazebo and generate acoustic measurements online from the instantaneous source and receiver poses. Fig.~\ref{fig:gazebo} shows three infrastructure configurations: a single floating barge with a rail-mounted mobile receiver, an eight-barge floating facility spanning approximately $60\times25\,\mathrm{m}^{2}$, and an eight-pod submerged facility arranged over a $40\times40\,\mathrm{m}^{2}$ area. The source and the mobile receiver are carried by USVs or UUVs in the floating and submerged cases, respectively.

\begin{figure*}[t]
    \centering
    \includegraphics[width=\linewidth]{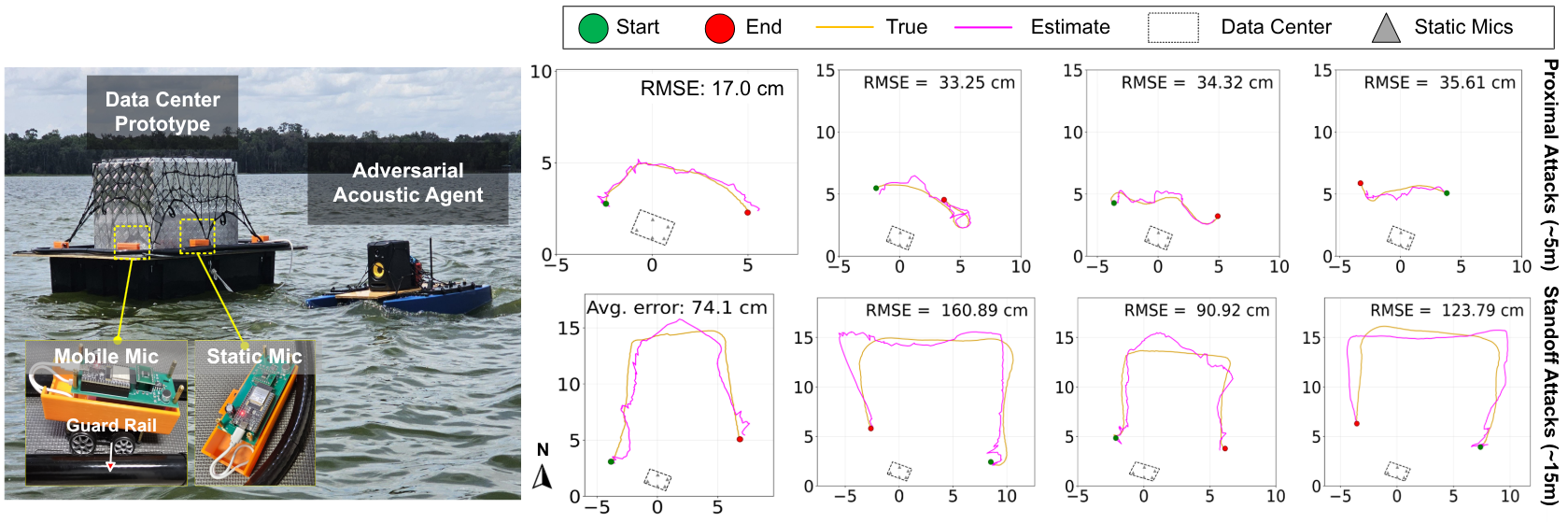}
    \vspace{-2mm}
    \caption{The lake experimental setup includes a scaled floating data center, distributed static microphones, a mobile microphone on a four-wheeled vehicle, and a USV carrying the acoustic source; the top row shows four proximal attacks executed by the USV, and the bottom row shows standoff attacks. The ground-truth path is obtained from the USV's GPS and compared against the estimated trajectory.}
    \vspace{-3mm}
    \label{fig:field}
\end{figure*}

During an acoustic attack, the receiver with the highest signal strength is selected as the first anchor, which typically corresponds to the unit closest to the source. A neighboring unit is selected as the second anchor, and the mobile robot remains there during stage 2. The duration of stage 1 depends on the facility scale and receiver spacing. For example, in the multi-barge scenario, a surveillance USV traveling at a nominal speed of  $1\,\mathrm{m/s}$ reaches two consecutive anchor locations within $10$\,s. This interval is compatible with the considered threat model, in which the source remains near one target long enough to maximize damage.

The six evaluated trajectories include localized attacks around one unit or facility-wide motion around the full infrastructure. The corresponding tracking RMSEs remain below $1$\,m in all cases. We observe lower TE for localized attacks, since nearby receivers provide stronger signals and better geometry. During wide trajectories, larger deviations occur in specific regions with weaker receiver coverage or less informative geometry. This highlights that receiver placement should be designed jointly with the expected patrol region to maintain signal strength and geometric observability over the whole facility.

\subsection{Open-Water Field Validation}
\subsubsection{Receiver Hardware and Data Acquisition}

We finally validate the proposed framework in a shallow-water lake using a $4'\times6'$ floating data center prototype equipped with six distributed receiver modules; see Fig.~\ref{fig:field}. Each sensing module is built on a custom-designed PCB and consists of an electret microphone, a two-stage LT1632CN8 operational amplifier, an ADS8699 SAR analog-to-digital converter, and an ESP32 development board for audio data acquisition and wireless transmission. The sampling rate is set to $62$\,kHz to enable high-resolution TDOA analysis. Each module transmits digitized audio data over WiFi to a central computer for subsequent estimation.

Fig.~\ref{fig:receiver_hardware} shows the analog front end and the assembled wireless receiver. The two-stage amplifier conditions the microphone signal before digitization, while the PCB integrates signal acquisition and wireless transmission into a deployable sensing module.

\begin{figure}[h]
    \centering
    \begin{subfigure}[b]{0.8\linewidth}
        \centering
        \includegraphics[width=\linewidth]
        {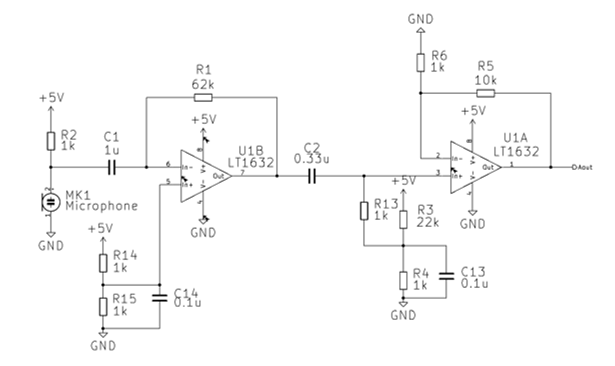}
        \caption{Analog front-end circuit.}
    \end{subfigure}\hfill
    \begin{subfigure}[b]{0.17\linewidth}
        \centering
        \includegraphics[width=\linewidth]
        {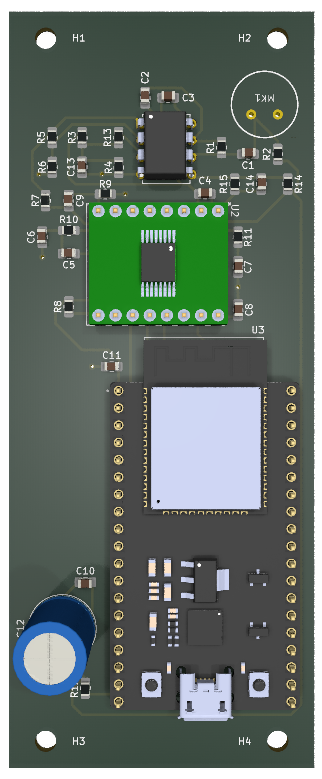}
        \caption{PCB.}
    \end{subfigure}
    \caption{The custom receiver modules enable distributed acoustic signal acquisition and wireless transmission to a central computer for AASL.}
    \label{fig:receiver_hardware}
\end{figure}

\subsubsection{Field Setup and Results}
In our floating data center scenario on a lake, the mobile receiver module is mounted on a small remote-controlled vehicle that runs between guard rails and approaches selected static receivers to form the required anchor pairs. The adversarial acoustic source is realized using a KRK Classic5 speaker mounted on a BlueBoat\texttrademark\,USV. The USV's motion, acoustic emission frequency, and emission amplitude are remotely controlled by a human operator.

Fig.~\ref{fig:field} shows the lake experiment setup and $8$ results selected from $10$ trials: four proximal acoustic attacks conducted within $5$\,m of the setup, and four standoff attacks from distances up to $15$\,m. In $9$ of the $10$ trials, our method successfully locates the source and maintains tracking error below $2$\,m throughout the tests. The tracking errors for the standoff cases are typically higher due to increased signal attenuation and weaker geometry, but remain below $1$\,m in the representative trial shown. These results demonstrate that the proposed framework remains effective under real sensing, communication, and synchronization imperfections.

\subsection{Sensitivity to Measurement Uncertainty}
\label{sec:noise_sensitivity}
TDOA-based localization is sensitive to uncertainty in receiver geometry, as position errors perturb the estimated range differences. During anchor formation, the mobile receiver must remain at a known short-baseline distance relative to a static receiver; however, platform motion and self-positioning error in a dynamic water environment introduce uncertainty into this geometry. We model the combined effect of mobile-receiver position error and anchor-measurement noise as an equivalent anchor range-difference uncertainty, $\sigma_a$. To characterize its impact, we vary $\sigma_a$ together with the fixed-pair measurement uncertainty, $\sigma_\rho$. For each $(\sigma_a,\sigma_\rho)$ combination, we perform $100$ Monte Carlo trials with randomized receiver geometries and source locations, and report SR and ILE.

\begin{figure}[h]
    \centering
    \includegraphics[width=\linewidth]{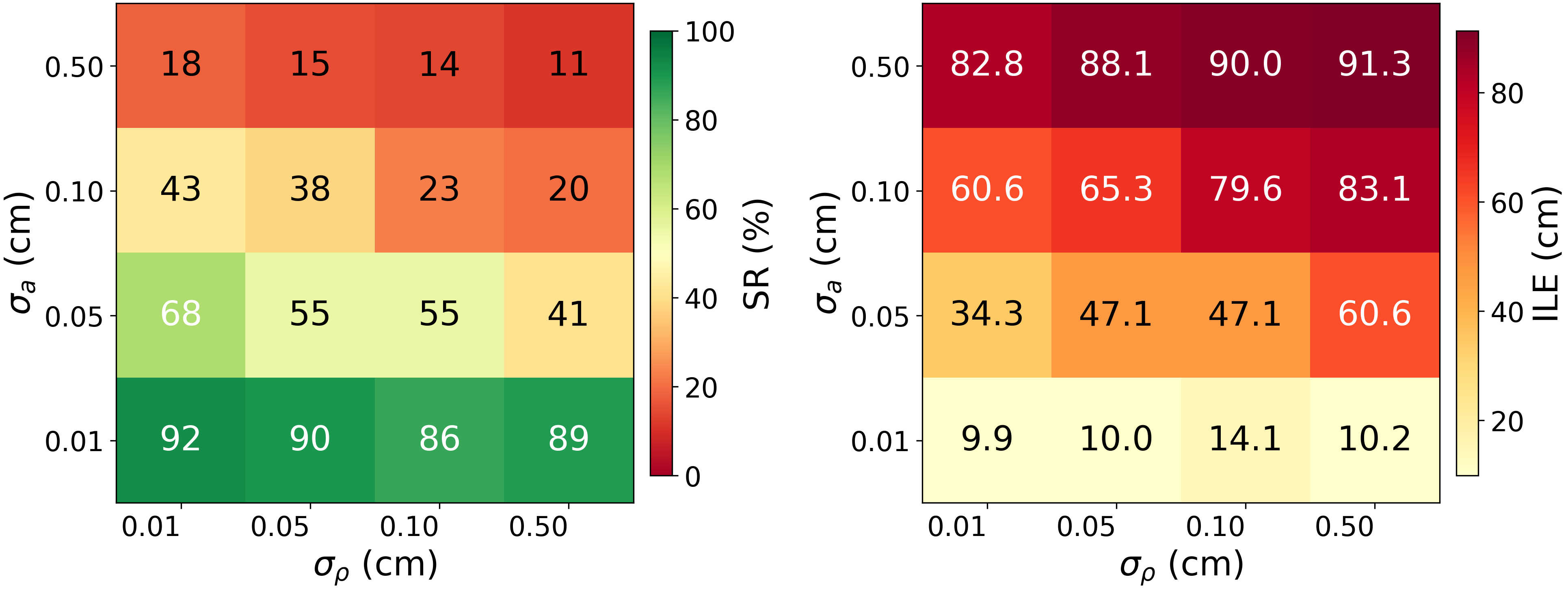}
    \vspace{-2mm}
    \caption{Sensitivity of the initial localization (Stage 1) to anchor-pair and fixed-pair measurement uncertainty is analyzed in terms of success rate (SR) and initial localization error (ILE). Each cell is averaged over $100$ Monte Carlo trials. The results show that performance is dominated by the anchor measurement noise $\sigma_a$, while the effect of fixed-pair noise $\sigma_\rho$ is comparatively moderate.}
    \vspace{-2mm}
    \label{fig:sigma_sweep}
\end{figure}

Fig.~\ref{fig:sigma_sweep} shows that the localization performance is more sensitive to the anchor measurements than to the fixed-pair measurements. At low anchor-pair uncertainty ($\sigma_a=0.01$\,cm), the success rate remains high at $86$-$92\%$ across all values of $\sigma_\rho$, and the corresponding ILE over successful trials remains near $10$-$14$\,cm. As $\sigma_a$ increases, the initialization performance degrades rapidly.

In contrast, increasing $\sigma_\rho$ for a fixed value of $\sigma_a$ produces gradual changes in SR and ILE. This behavior is consistent with the role of the two measurement types: the anchor measurements define the unaliased candidate manifold, whereas the fixed-pair measurements mainly disambiguate and score candidate points along that manifold. The key insight is that a poorly conditioned manifold with high $\sigma_a$ cannot be reliably traced regardless of the quality of the fixed-pair measurements. Empirically, we observe reliable localization when $\sigma_a \lesssim 0.05\,\text{cm}$. These findings address RQ2 by identifying anchor-measurement accuracy as a dominant factor for localization reliability.

% These results indicate that practical system design should prioritize accurate anchor-pair measurement and mobile-receiver positioning.

\subsection{Ablation: Adaptive Receiver Weighting}
\label{sec:adaptive_weighting_eval}

Distributed acoustic sensing is sensitive to channel degradation: a single noisy receiver can corrupt multiple TDOA pairs and affect localization. We therefore ablate the proposed quality-adaptive weighting by comparing it against a uniform weighting baseline. In each trial, one non-anchor static receiver is selected randomly, and its acoustic waveform is corrupted with additive white Gaussian noise. This corruption affects every TDOA pair containing that channel. We vary the corrupted-channel SNR from $-30$\,dB to $0$\,dB and include a clean condition. The uniform and adaptive estimators are tested on identical receiver geometries and source trajectories. The uniform estimator assigns $q_{ij}=1$ to every fixed-receiver pair. The adaptive technique estimates the received signal quality using~\eqref{eq:receiver_quality} and propagates it to the pair weights $q_{ij}$. 

\begin{figure}[h]
    \centering
    \includegraphics[width=\linewidth]{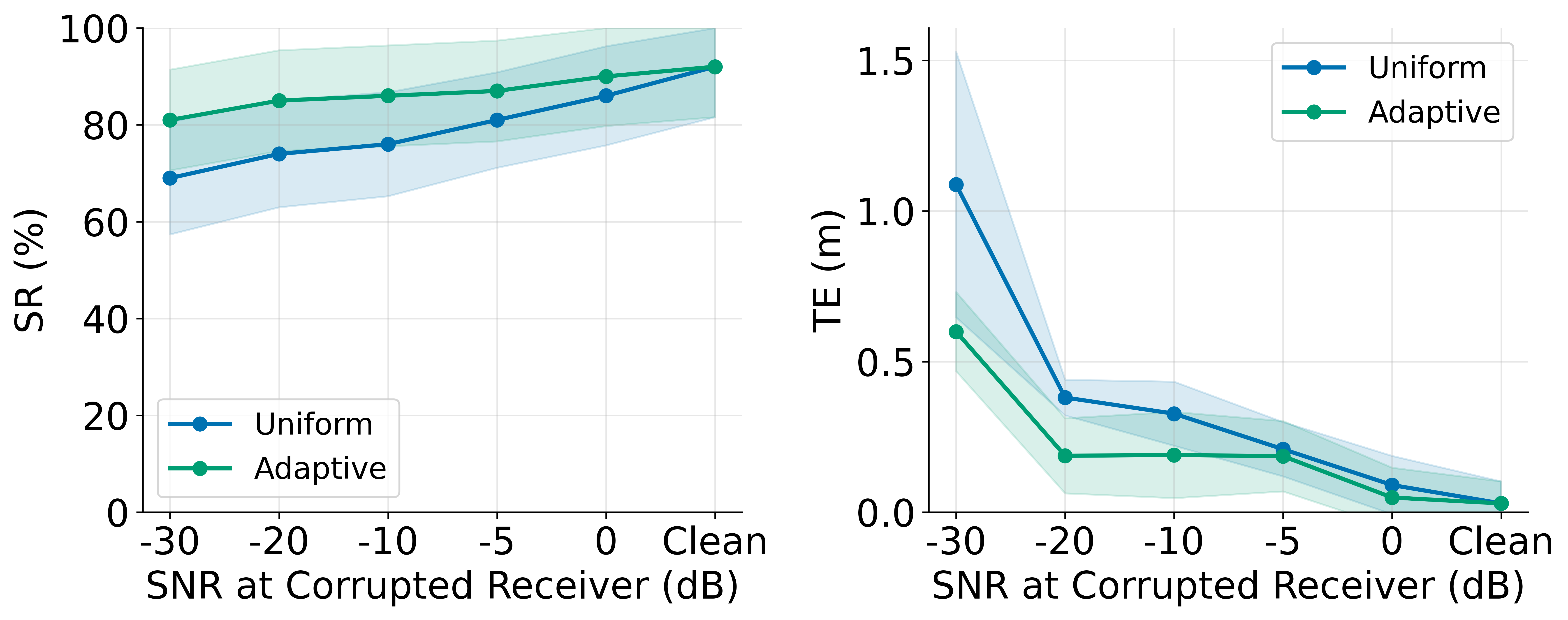}
    \vspace{-1mm}
    \caption{Effect of adaptive measurement weighting is analyzed under single-channel degradation. One randomly selected non-anchor receiver is corrupted at different SNR levels. SR and TE are reported; shaded regions denote $95\%$ confidence intervals. Adaptive weighting provides the largest improvement at low SNR and converges to uniform weighting under clean measurements.}
    \vspace{-1mm}
    \label{fig:adaptive_rx}
\end{figure}

Fig.~\ref{fig:adaptive_rx} shows that adaptive weighting provides noticeable benefit under severe channel degradation. At $-30$\,dB, the proposed weighting increases the success rate by approximately $10\%$ and reduces the mean tracking error from $1.09$\,m to $0.60$\,m. The performance gap decreases as the corrupted-channel SNR improves. Under clean measurements, both approaches converge to $92\%$ SR and $0.03$\,m TE.
This ablation addresses RQ3, showing that receiver-level quality weighting limits the influence of a locally corrupted channel without discarding the remaining useful information.

\subsection{Practical Sensing Challenges}
\label{sec:practical_challenges}

Our experiments reveal practical difficulties in obtaining reliable phase-based TDOA measurements. In particular, independent receiver clocks, intermittent wireless data loss, and multipath propagation introduce errors that are not captured by the measurement models. We summarize these observations to identify the key challenges and plan improvements for future deployments.

\subsubsection{Synchronization and Data Loss}

Each wireless sensing module operates with an independent sampling clock and transmits acoustic data over WiFi to a central computer. This architecture introduces challenges associated with sampling-rate mismatch, relative clock drift, variable communication latency, and packet loss. In earlier experiments targeting $48$\,kHz sampling, the receiver sampling rates initially differed by several hundred Hertz. Initial calibration reduced these discrepancies to tens of Hertz but did not eliminate the relative drift.

We investigated periodic reference-clock synchronization and timestamp-based packet alignment. However, the microcontrollers occasionally failed to respond to synchronization requests while handling continuous sampling and wireless transmission. Missing or corrupted packets also required discarding portions of the recorded data to maintain channel alignment. As illustrated in Fig.~\ref{fig:daq_challenge}, these interruptions resulted in fragmented multichannel recordings, while residual temporal misalignment produced inconsistencies in the relative phases of simultaneously recorded signals.

\begin{figure}[t]
    \centering
    \includegraphics[width=\linewidth]{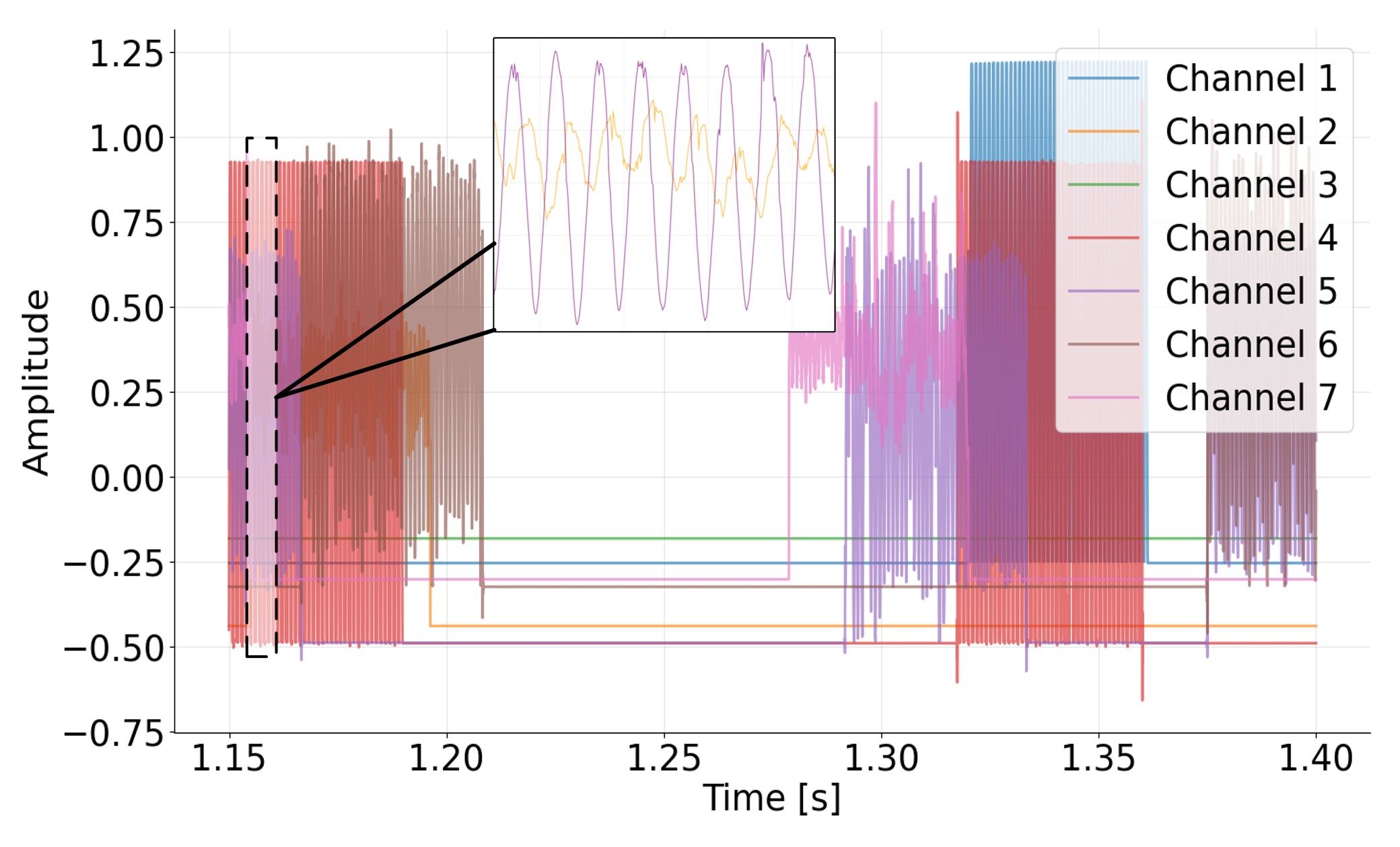}
    \vspace{-3mm}
    \caption{Synchronization and data-loss challenges observed during multichannel wireless acquisition. Flat segments indicate missing or discarded waveform data, while the inset illustrates phase misalignment between receiver channels. These effects compromise the temporal consistency required for phase-based TDOA estimation.}
    \label{fig:daq_challenge}
    \vspace{-2mm}
\end{figure}

These observations highlight that coarse packet alignment and initial sampling-rate calibration are insufficient to guarantee long-term phase coherence among independently clocked receivers. Reliable TDOA measurements require either synchronized acquisition hardware or continuous estimation and compensation of inter-receiver clock offsets and drift.

\subsubsection{Multipath-Induced Phase Instability}

Acoustic reflections from surrounding structures and environmental boundaries introduce additional uncertainty in narrowband phase measurements. The direct and reflected components combine at each receiver, potentially altering the observed signal phase and the corresponding TDOA estimate.

Fig.~\ref{fig:tdoa_mismatch} compares the measured and geometrically expected TDOA for a stationary source-receiver configuration over a one-second observation interval. Although the expected TDOA remains constant, the measured values exhibit substantial temporal fluctuations. These deviations are consistent with the combined effects of multipath interference, residual synchronization errors, and intermittent data loss. Consequently, the measured phase difference does not always provide a reliable estimate of the direct-path range difference.

\begin{figure}[t]
    \centering
    \includegraphics[width=\linewidth]{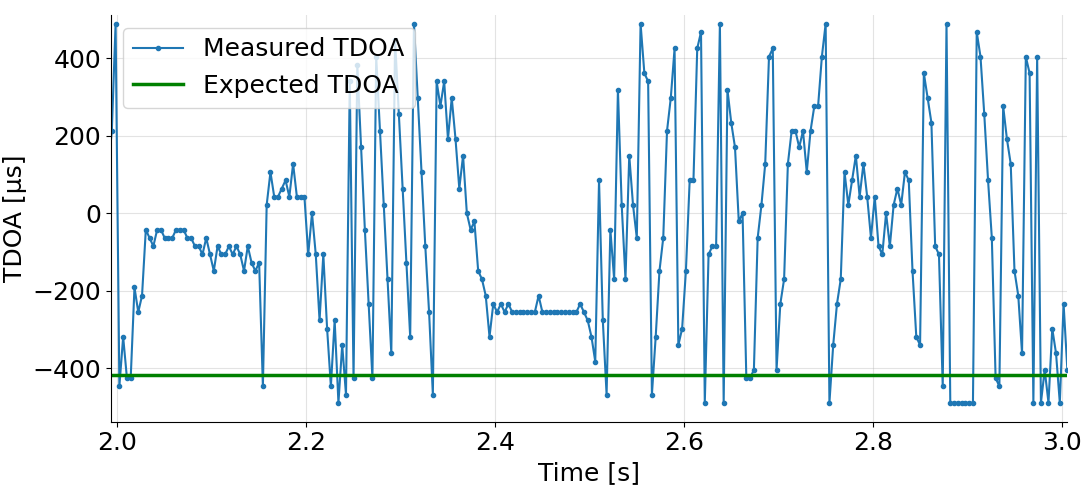}
    \vspace{-1mm}
    \caption{Measured and geometrically expected TDOA over a one-second interval with stationary source and receivers. Despite the fixed geometry, the phase-derived TDOA fluctuates substantially. Multipath interference, residual clock misalignment, and data loss may all contribute to the observed deviations.}
    \label{fig:tdoa_mismatch}
    \vspace{-2mm}
\end{figure}

To examine multipath effects separately, our earlier simulations investigated a $20\times20\times20$\,m environment with a $1$\,kHz tonal source and varying reflection orders and wall absorption coefficients. The results showed increasing phase deviations under stronger reflections, with mean errors exceeding $40^\circ$ even for first-order reflections at low absorption. At $1$\,kHz in air, such a phase deviation corresponds to approximately $3.8$\,cm of equivalent range-difference error. These findings indicate that multipath-induced phase distortion can substantially affect narrowband localization, even when receiver geometry is accurately known.

\noindent\textbf{Lessons for future efforts.}
The combined findings suggest several priorities for practical deployments: \textbf{(i)} improving inter-receiver synchronization through shared-clock acquisition or continuous clock-offset estimation; \textbf{(ii)} detecting packet loss and excluding temporally inconsistent measurements; \textbf{(iii)} calibrating receiver phase offsets and accurately estimating mobile-receiver position; and \textbf{(iv)} incorporating multipath-aware measurement models and reliability assessment. Alternative sensing architectures, including compact multi-element arrays for DOA estimation or matched-field processing, may also improve robustness and warrant further investigation.

\section{Conclusion}
\label{sec:conclusion}

This report investigated the feasibility of localizing and tracking narrowband adversarial acoustic sources using sparsely distributed receivers augmented with a mobile sensing platform. We developed a two-stage framework that exploits short-baseline anchor constraints to resolve phase ambiguity during initialization, followed by particle-filter tracking with quality-adaptive measurement weighting. Our analyses showed that two anchor constraints are essential for reliable 3D localization and that performance is more sensitive to anchor-pair measurement accuracy than fixed-pair measurements. Moreover, quality-adaptive weighting improved robustness when signal quality at individual receiver channels is degraded. Integration with ROS-Gazebo demonstrated scalable surveillance operation across floating and submerged facilities, while open-water experiments validated the complete system with sub-meter tracking error. Beyond these successful demonstrations, our experiments revealed practical limitations associated with independently synchronized receivers, wireless data loss, and multipath-induced phase instability. These observations emphasize that reliable phase-based localization depends not only on geometric observability but also on the quality and temporal consistency of the acquired measurements. Future efforts will prioritize improved receiver synchronization and phase calibration, reliable multichannel acquisition, accurate mobile-receiver positioning, and multipath-aware measurement processing.

% These results establish mobile-anchor-assisted sensing as a practical approach for acoustic surveillance of distributed offshore and underwater infrastructure. Future work will investigate surveillance policies, improved mobile-receiver pose estimation, and robustness to stronger multipath effects and intermittent adversarial emissions.

\section*{Acknowledgments}
This research is supported by the Office of Naval Research
(ONR) grant \#N000142412596.

% ---------- References ----------
{
\bibliographystyle{ieeetr}
\bibliography{bibs/combined}
}

% Supplementary 
% \clearpage
% \input{suppl}

\end{document}